\documentclass[11pt, a4paper, goog]{google}

\usepackage[authoryear, sort&compress, round]{natbib}
\usepackage{bbm}
\usepackage{algorithm}
\usepackage{algpseudocode}
\usepackage{tcolorbox}
\tcbuselibrary{breakable}

\newtheorem{proposition}{Proposition}

\providecommand{\E}{\mathbb{E}}
\providecommand{\mc}[1]{\mathcal{#1}}
\providecommand{\set}[1]{{\{ #1 \}}}
\providecommand{\abs}[1]{{| #1 |}}
\providecommand{\brac}[1]{{[ #1 ]}}
\providecommand{\bracBig}[1]{{\Big[ #1 \Big]}}
\providecommand{\parenBig}[1]{{\Big( #1 \Big)}}
\providecommand{\bsl}{\backslash}
\providecommand{\Eqref}[1]{Equation~\ref{#1}}

\keywords{discrete diffusion, diffusion language models, MeanFlow, continuous-time Markov chains, few-step generation}

\uselogo{}

\title{Acceleration of Diffusion Language Model through Discrete Average Generator}

\correspondingauthor{yidongouyang@g.ucla.edu}

\reportnumber{}

\author[1,2,*]{Yidong Ouyang}
\author[1]{Zhengyan Wan}
\author[2]{Themis Haris}
\author[2]{Tian Tan}
\author[2]{Liqian Peng}
\author[2]{Henry Li}
\author[2]{Ziqian Lin}
\author[2]{Jianhang Chen}
\author[2]{Maryam Karimzadehgan}
\author[2]{Alec Go}
\author[1]{George Michailidis}

\affil[1]{University of California, Los Angeles}
\affil[2]{Google}
\affil[*]{Work done as a research intern at Google}

\begin{abstract}
Discrete diffusion models and flow matching have emerged as powerful frameworks
for generative modeling over discrete state spaces, yet efficient few-step generation
remains a fundamental challenge. In this work, we introduce the \emph{Discrete Average
Generator}, a principled extension of MeanFlow to
Continuous-Time Markov Chains (CTMCs). Analogously to how MeanFlow defines an average
velocity field over a time interval in continuous spaces, we define an average generator as the normalized increment of the transition kernel over a time interval. We show that this average generator satisfies a self-consistency identity, which provides the foundation for our training objective. We further develop training strategies that align with the standard training paradigm of diffusion language models while keeping the resulting objective tractable. When projected onto per-coordinate marginals, the self-consistency identity admits a closed-form expression, enabling efficient training and inference. In Potts model simulations, our objective reduces the total variation distance of the $K$-step sampler by up to 67\%. On OpenWebText, our method achieves the lowest generative perplexity among the evaluated methods for 8 to 64 sampling steps while enabling a $16\times$ acceleration, and achieves comparable performance to existing methods on ImageNet.

\end{abstract}

\begin{document}

\maketitle

\section{Introduction}

Diffusion models and flow matching have achieved remarkable success in continuous
domains such as image and audio generation~\citep{lipman2024flow}.
Extending these generative frameworks to discrete spaces, such as natural language,
has attracted growing interest, with discrete flow matching and masked diffusion
models emerging as promising approaches~\citep{campbell2024generative,
gat2024discrete, shi2024simplified, sahoo2024simple}.
However, a fundamental challenge shared with their continuous counterparts remains:
sampling requires multiple sequential network evaluations, making inference
computationally expensive. The difficulty is compounded in discrete spaces, where practical
samplers update all coordinates independently within a step: with few steps, each
coordinate is drawn from a law computed from the instantaneous rate rather than from the
finite-step kernel, and the dependence between coordinates updated together is lost.

In continuous settings, this bottleneck has been addressed by consistency
models~\citep{Song2023ConsistencyM}, progressive distillation~\citep{salimans2022progressive},
and more recently MeanFlow~\citep{Geng2025MeanFF}, which defines an average velocity
field over time intervals and enables high-quality generation in very few steps without
multi-step distillation. The key insight of MeanFlow is that the average velocity satisfies
a self-consistent identity involving the instantaneous velocity and its material derivative,
which can be exploited as a fixed-point training objective.

In this work, we propose the \emph{Discrete Average Generator}, a principled extension
of MeanFlow to discrete state spaces governed by Continuous-Time Markov Chains (CTMCs).
The central challenge is that the continuous notion of velocity and
its spatial gradient has no direct analogue in discrete spaces. We resolve this by defining the CTMC average generator over an interval $[t,r]$ as the normalized increment of the transition kernel, i.e.\ the unique object for which a single Euler step of size $r-t$ reproduces the exact transition law. We further derive an average generator identity that forms the basis of our training objective. Inspired by the progress of velocity prediction ($u$-prediction / $v$-prediction) to data prediction ($x$-prediction) made in continuous MeanFlow literature described in Section \ref{ap:meanflow-bg}, we develop a corresponding progression from transition rate prediction ($U$-prediction / $Q$-prediction) to posterior prediction ($\tilde{p}$-prediction) in Section \ref{sec:objective}.

Although the $\tilde{p}$-prediction is compatible with existing pretrained diffusion language models, it is intractable at the sequence level because of the huge computational cost associated with the multiplication of the generator matrix with dimension
$|\mathcal{S}|^{\mathcal{D}} \times |\mathcal{S}|^{\mathcal{D}}$.
By projecting the identity onto per-coordinate marginals, as described in Section~\ref{sec:tractable}, we obtain a tractable closed-form involving only $\mathcal{D} \times |\mathcal{S}|$-dimensional quantities.
Our main contributions are as follows:
\begin{itemize}
    \item We introduce the Average Generator for CTMCs and prove the Discrete Average
    Generator Identity, the discrete analogue of the
    MeanFlow identity.
    \item We derive a tractable training objective based on $\tilde{p}$-prediction and per-coordinate marginal projection, which aligns with existing diffusion language model parameterizations and admits a closed-form expression involving only $\mathcal{D} \times |\mathcal{S}|$-dimensional quantities.
    \item We validate our approach through simulations, text generation on the OpenWebText dataset, and image generation on the ImageNet dataset. On OpenWebText, our method achieves the lowest generative perplexity among the compared methods for $8$ to $64$ sampling steps and maintains lower generative perplexity with  $16\times$ fewer generation steps against the strongest baseline. On ImageNet, our method achieves comparable FID and Inception Score to existing acceleration methods for diffusion language models.
\end{itemize}

\section{Preliminary}\label{sec:prel}

\subsection{Discrete Flow Matching}
Consider a Continuous-Time Markov Chain (CTMC) defined on the state space $\mc{S}^{\mc{D}}$ with marginal probability mass functions $\set{p_t}_{t\in[0,1]}$. Let $Q_t$ denote the associated transition rate matrix (i.e., the generator) at time $t$, which has dimensions $|\mc{S}|^{\mc{D}} \times |\mc{S}|^{\mc{D}}$.
For any distinct states $x, z \in \mc{S}^{\mc{D}}$ (where $x \neq z$), the entry $Q_t(x,z)$ represents the instantaneous transition intensity from state $x$ to state $z$ at time $t$.
If $Q_t(x,z)$ satisfies the following rate-properties:
\begin{align*}
    Q_t(x,z)\ge 0, \text{ for any }z\neq x, \text{ and }~  \sum_{z\in\mc{S}^\mc{D}}Q_t(x,z)=0,
\end{align*}

and the Kolmogorov forward equation (also known as the continuity equation):
\begin{equation}\label{eq:Kolmogorov}
\dot{p}_t(x)=\sum_{z\in\mc{S}^\mc{D}}Q_t(z,x)p_t(z)=\sum_{z\neq x}Q_t(z,x)p_t(z)-\sum_{z\neq x}Q_t(x,z)p_t(x),
\end{equation}
the transition rate $Q_t$ is also the generator of the CTMC $\set{p_t}_{t\in[0,1]}$. In
particular, $p_{t+h|t}(x|z)=\delta_{z}(x)+Q_t(z,x)h+o(h)$ for a small time step $h$.

To learn a transition rate $Q_t$ that can transport from an initial noise distribution $p_0$ to a target data distribution $p_1$, a well-developed method is to learn the conditional expectation of the conditional transition rate $Q_t(x,z|x_1)$ that generates the conditional probability path $p_{t|1}(\cdot|x_1)$. Then $Q_t(x,z)=\E_{p_{1|t}(x_1|x)}\brac{Q_t(x,z|x_1)}$ can generate the target probability path $p_t$ \citep[see Proposition 3.1 of][]{campbell2024generative}, i.e., it satisfies the Kolmogorov forward \Eqref{eq:Kolmogorov}. For completeness, we include the proof in Appendix \ref{appendix:prop}. We are free to define the conditional probability path $p_{t|1}(\cdot|x_1)$ and with the corresponding conditional transition rate as needed.

A common approach is to construct a coordinate-wise conditional probability
path and transition rate,
\begin{equation}\label{eq:coordinate-wise conditional path}
   p_{t|1}(x|x_1)=\prod_{d=1}^{\mc{D}}p^d_{t|1}(x^d|x_1^d),\qquad Q_t(x,z|x_1)=\sum_{d=1}^\mc{D}\delta_{x^{\bsl d}}(z^{\bsl d})Q_t^{d}(x^d,z^d|x_1^d),
\end{equation}
where we use the notation
$x^{\bsl d}$ to denote the vector obtained by
removing the $d$-th coordinate of $x$, i.e.,
$x^{\bsl d} := (x^1, \ldots, x^{d-1}, x^{d+1}, \ldots, x^\mathcal{D})$, $Q_t^{d}(x^d,z^d|x_1^d)$ is the conditional transition rate that generates the conditional probability path $p_{t|1}^d$, and $\delta_x(z)$ the Kronecker delta satisfying $\delta_x(z)=1$ if $x=z$ and $\delta_x(z)=0$ if $x\neq z$.
A popular choice of probability path and the associated conditional transition rate used in the previous works \citep{campbell2024generative,gat2024discrete} is
\begin{equation}\label{eq:mixture path}
    p_{t|1}^d(x^d|x_1^d)=(1-\kappa_t)p_0^d(x^d)+\kappa_t\delta_{x_1^d}(x^d),\qquad Q_t^{d}(x^d,z^d|x_1^d)=\lambda_t\big(\delta_{x_1^d}(z^d)-\delta_{x^d}(z^d)\big),
\end{equation}
where $\kappa_t:[0,1]\to[0,1]$ is a non-decreasing function satisfying $\kappa_0=0$ and
$\kappa_1=1$, and $\lambda_t\triangleq\frac{\dot{\kappa}_t}{1-\kappa_t}$. Common sources are
the uniform distribution $p_0^d=\mathrm{Unif}(\mathcal{S})$ and the masked source
$p_0^d=\delta_{\mathbf{m}}$, which puts all mass on a mask token $\mathbf{m}$.
After defining the conditional path and rate, the marginal transition rate is given by
\begin{equation}\label{eq:analytic_rate}
\begin{aligned}
Q_t(x,z)=\sum_{x_1}Q_t(x,z|x_1)p_{1|t}(x_1|x)=&~\sum_{d=1}^\mc{D}\delta_{x^{\bsl d}}(z^{\bsl d})\sum_{x_1^d}Q_t^{d}(x^d,z^d|x_1^d)p^d_{1|t}(x_1^d|x) \\
    \overset{\triangle}{=}&~\sum_{d=1}^\mc{D}\delta_{x^{\bsl d}}(z^{\bsl d})Q_t^{d}(x,z^d),\notag
\end{aligned}
\end{equation}
where $p_{1|t}^d(x_1^d|x)=\sum_{x^{\bsl d}_1}p_{1|t}(x_1|x)$ is the posterior. When the
coordinate is clear from the argument we drop the superscript and write $p_{1|t}(z^d|x)$
for $p^d_{1|t}(z^d|x)$; for the mixture path, $Q^d_t(x,z^d)=\lambda_t\,p_{1|t}(z^d|x)$ for
$z^d\neq x^d$.

\paragraph{Training objective}
To learn the transition rate, a common approach is to minimize the Bregman divergence between the conditional generator and the estimated generator \citep{holderrieth2024generator,lipman2024flow,shaul2024flow,wan2025error}:
\begin{equation}\label{eq:training-rate}
    \E_{x_1,t,x_t\sim p_{t \mid 1}(x_t\mid x_1)}\bracBig{\sum_{z\neq x_t}D_F\parenBig{Q_t(x_t,z|x_1)\Big\|Q_t(x_t,z)}},\notag
\end{equation}
where $D_F$ denotes the Bregman divergence
induced by $F(x) = x\log x$, $D_F(a\|b)\triangleq F(a)-F(b)-F'(b)(a-b)=a\log\frac{a}{b}-a+b$,
which is defined for $a\ge0$ and $b>0$ and vanishes if and only if $a=b$.
After some derivation in Appendix \ref{ap:training objective}, it is equivalent to learn the posterior network $p_{1|t}(x^d_1|x_t)$ under the generalized Kullback–Leibler (KL) divergence:
\begin{align}\label{eq:training-posterior}
    \E_{x_1,t,x_t\sim p_{t \mid 1}(x_t\mid x_1)}\Big[\lambda_t\sum_{d=1}^\mc{D}\Big\{-\parenBig{1-\delta_{x^d_1}(x^d_t)}\log p_{1|t}^d(x^d_1|x_t)+\delta_{x^d_1}(x^d_t)-p_{1|t}^d(x^d_t|x_t)\Big\}\Big],
\end{align}
which is equivalent to Equation~37 of \citet{shaul2024flow}. If we use the masked source with the mixture path, then \Eqref{eq:training-posterior} recovers the training objective for masked diffusion models \citep{shi2024simplified,sahoo2024simple,ou2024your,nie2025large}.

\subsection{MeanFlow}\label{ap:meanflow-bg}

In MeanFlow~\citep{Geng2025MeanFF}, the authors define the average velocity field in continuous space as the displacement from time $t$ to $r$ divided by $r-t$ ($0<t<r<1$) as
$$
u\left(x_t, t, r\right) \triangleq \frac{1}{r-t} \int_t^r v\left(x_s, s\right) d s,
$$
where $v\left(x_s, s\right)$ denotes the instantaneous velocity, $t = 0$ corresponds to the noise, and $t = 1$ corresponds to the data. Then, the MeanFlow identity can be derived (see Appendix \ref{ap:meanflow_identity}):
\[
u(x_t, t, r) = v(x_t, t) - (t-r) [\partial_t u(x_t, t, r) + v(x_t, t) \cdot \nabla_{x_t} u(x_t, t, r)].
\]

Therefore, the training objective for MeanFlow can be formulated as the following fixed-point optimization problem:
\[
\mathcal{L}(\theta) = \mathbb{E}_{x_1,t,r,x_t\sim p_{t \mid 1}(x_t\mid x_1)} \left\| u_\theta(x_t, t, r) - \operatorname{sg}\big(u_{\text{tgt}}(x_t, t, r)\big) \right\|_2^2,
\]
where $u_{\text{tgt}}(x_t, t, r) = v(x_t, t) - (t - r) \left( v(x_t, t) \cdot \nabla_x u_\theta(x_t, t, r) + \partial_t u_\theta(x_t, t, r) \right)$ and $\operatorname{sg}(\cdot)$ denotes the stop-gradient operator.

\paragraph{$v$-prediction parameterization}
\citet{Geng2025ImprovedMF} identify an underlying limitation of MeanFlow, i.e., the target $u_{\text{tgt}}(x_t, t, r)$ depends on the unknown oracle $u(x_t, t, r)$,
and is therefore approximated using the network prediction $u_\theta(x_t, t, r)$.
This self-referential construction introduces bias and leads to unstable training.

To address this issue, they propose a velocity-based reparameterization derived from the MeanFlow identity:
\begin{equation}
\label{eq:v_parametrization}
V(x_t, t) = u_\theta(x_t, t, r) + (t - r)\, \left( v(x_t, t) \cdot \nabla_x u_\theta(x_t, t, r) + \partial_t u_\theta(x_t, t, r) \right).
\end{equation}

Instead of directly regressing $u$, the model is trained via a standard velocity regression objective:
\begin{equation}
\label{eq:v_parametrization_loss}
\mathbb{E}_{x_1,r,t,x_t\sim p_{t \mid 1}(x_t\mid x_1)}\big\| V(x_t, t) - v^{\text{oracle}}(x_t, t) \big\|_2^2.
\end{equation}

In practice, $v(x_t, t)$ can be obtained either via the boundary condition $v(x_t, t) = u_\theta(x_t, t, t)$
or via an auxiliary prediction head. This improved reparameterization significantly reduces variance in calculating the Jacobian-vector product, leading to more stable optimization.

\paragraph{$x$-prediction parameterization}
Beyond velocity prediction, an alternative parameterization is to directly model the clean data.
As observed by \citet{Lu2026OnestepLI}, predicting $x_1$ instead of the velocity field
better preserves the data manifold and improves training stability. Therefore, they reparameterize the MeanFlow as
\[
u_\theta(x_t, t, r) = \frac{x_\theta(x_t, t, r) - x_t}{1 - t}.
\]

Substituting this into \Eqref{eq:v_parametrization} and \Eqref{eq:v_parametrization_loss},
we obtain the training objective for MeanFlow under the $x$-prediction parameterization.
This formulation further improves empirical stability.

\section{Proposed Method}
\label{sec:dag}
\begin{table}[h]
\centering
\caption{Conceptual and mathematical comparison between MeanFlow and the proposed Discrete Average Generator.}
\renewcommand{\arraystretch}{1.35}%
\resizebox{\linewidth}{!}{
\begin{tabular}{@{}p{3.5cm}p{6.5cm}p{7.5cm}@{}}
\toprule
\textbf{Property} & \textbf{MeanFlow} & \textbf{Discrete Average Generator} \\ \midrule
\textbf{State Space} & Continuous domain ($\mathbb{R}^d$) & Discrete space ($\mathcal{S}^\mathcal{D}$) via CTMCs \\
\textbf{Instantaneous Measure} & Velocity field $v(x_t, t)$ & Transition rate generator $Q_t(x, z)$ \\
\textbf{Average Measure} & Average Velocity: \newline $u(x_t, t, r) = \frac{1}{r-t}\int_{t}^{r} v(x_s, s)ds$ & Average Generator: \newline $U_{t,r} = \frac{P_{t\to r}-I}{r-t}$
 \\
\textbf{Material Derivative} & Spatial gradient used: \newline $\frac{d}{dt}u = \partial_t u + v \cdot \nabla_x u$ & Dynkin's formula used: \newline $\frac{D}{Dt}U = \partial_t U + Q_t U$
 \\
\textbf{Fixed-Point Identity} & $u = v - (t-r)(\partial_t u + v \cdot \nabla_x u)$ & $U_{t,r} = Q_t - (t-r)(\partial_t U_{t,r} + Q_t U_{t,r})$ \\
\textbf{Stable Target \newline Parameterization} & $x$-prediction \newline (predicting clean data $x_1$) & $\tilde p$-prediction \newline (predicting a per-coordinate distribution $\tilde p_{t,r}(\cdot|x_t)$) \\ \bottomrule
\end{tabular}}
\label{tab:compa}
\end{table}

\subsection{Average Generator}

In continuous MeanFlow, acceleration is enabled by integrating the instantaneous velocity field, where a single Euler step of size $r-t$ reproduces the endpoint of the flow exactly. We define the analogue CTMC generator for discrete spaces. Let $P_{t\to r}$ denote the transition kernel of the CTMC from time $t$ to $r$, i.e., $P_{t\to r}(x,z)=p_{r|t}(z\mid x)$. We define the \emph{average generator} over $[t,r]$ as
\begin{equation}\label{eq:U-def}
 U_{t,r} \triangleq \frac{P_{t\to r}-I}{r-t},
 \qquad\text{and}\qquad U_{t,t}\triangleq Q_t,
\end{equation}
so that a single Euler step is exact,
$\delta_x^\top\big(I+(r-t)U_{t,r}\big)=p_{r|t}(\cdot|x)$ for every $x$. By Proposition~\ref{prop:U-properties} in
Appendix~\ref{ap:proof-prop1}, $U_{t,r}$ is itself a valid generator (nonnegative
off-diagonal entries, zero row sums) and satisfies $U_{t,r}\to Q_t$ as $r\to t$. We derive the following identity for the transition rate in a discrete state space, with the proof provided in Appendix~\ref{ap:proof-prop1}.

\begin{proposition}[Average Generator Identity]\label{prop:agi}
For a sufficiently smooth generator $Q_t$, the average generator satisfies:
\begin{equation}
\label{eq:meanflow_identity}
U_{t,r}
=
Q_t
-
(t-r)
\left(
\partial_t U_{t,r}
+
Q_tU_{t,r}
\right).
\end{equation}
\end{proposition}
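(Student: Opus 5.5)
The plan is to multiply the definition \eqref{eq:U-def} through by $r-t$, writing $(r-t)U_{t,r}=P_{t\to r}-I$, and then differentiate both sides in the \emph{initial} time $t$ with $r$ held fixed. The identity should then drop out of the Kolmogorov backward equation for the transition kernel. This mirrors the derivation of the continuous MeanFlow identity, which differentiates $(r-t)u=\int_t^r v\,ds$ along the trajectory.

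The key input is the backward equation
\[
\partial_t P_{t\to r}=-Q_tP_{t\to r}.
\]
I would justify it from the Chapman--Kolmogorov relation $P_{t\to r}=P_{t\to t+h}P_{t+h\to r}$ together with the short-time expansion $P_{t\to t+h}=I+hQ_t+o(h)$ recalled in Section~\ref{sec:prel}. These give
\[
P_{t\to r}-P_{t+h\to r}=hQ_tP_{t+h\to r}+o(h).
\]
Dividing by $h$ and letting $h\to0$ yields the backward equation, provided $s\mapsto P_{s\to r}$ is continuous. That continuity follows from the assumed smoothness of $Q_t$, since the state space is finite and $P$ solves a linear ODE with continuous coefficients. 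I expect this to be the only genuinely delicate step: making the $o(h)$ uniform and getting the derivative from both sides. I would handle it by noting that on a finite state space $P_{s\to r}$ is the solution of the linear matrix ODE $\partial_s P=-Q_sP$ with $P_{r\to r}=I$, which exists, is unique and is $C^1$ in $s$ by standard ODE theory. This also identifies it with the transition kernel.

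With the backward equation in hand, I differentiate in $t$ using the product rule on the left:
\[
-U_{t,r}+(r-t)\,\partial_tU_{t,r}=\partial_t P_{t\to r}=-Q_tP_{t\to r}.
\]
Next I substitute $P_{t\to r}=I+(r-t)U_{t,r}$ on the right to get
\[
-U_{t,r}+(r-t)\,\partial_tU_{t,r}=-Q_t-(r-t)\,Q_tU_{t,r}.
\]
Rearranging gives $U_{t,r}=Q_t+(r-t)\big(\partial_tU_{t,r}+Q_tU_{t,r}\big)$, which is exactly \eqref{eq:meanflow_identity}, because $-(t-r)=r-t$.

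Finally, I check the degenerate case $r=t$, where $U_{t,t}=Q_t$ by convention. There the identity reads $Q_t=Q_t$, provided $\partial_tU_{t,r}$ stays bounded as $r\to t$. This boundedness follows from smoothness, since $U_{t,r}=\frac{1}{r-t}\int_t^r(\text{smooth})$ is $C^1$ in $t$ up to the diagonal. I would also note that the product $Q_tU_{t,r}$ appears on the left, consistent with the backward equation, and that this is the analogue of the transport term $v\cdot\nabla_x u$, as in Table~\ref{tab:compa}.
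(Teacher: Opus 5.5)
Your proof is correct and is essentially the paper's argument. The paper sets $\Omega_{t,r}=(t-r)U_{t,r}=I-P_{t\to r}$ (your $(r-t)U_{t,r}=P_{t\to r}-I$ up to sign), applies the backward equation $\partial_t P_{t\to r}=-Q_tP_{t\to r}$, substitutes $P_{t\to r}=I-\Omega_{t,r}$, and expands with the product rule; the only addition on your side is justifying the backward equation via Chapman--Kolmogorov and linear ODE theory, which the paper takes as given.
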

\Eqref{eq:meanflow_identity} expresses the average generator as a self-consistent
correction of the instantaneous generator. We call $Q_tU_{t,r}$ the \emph{transport term}.
We compare the proposed framework with its continuous counterpart in Table \ref{tab:compa}. This identity forms the basis for a fixed-point training objective described in the following section.

\subsection{Training Objective of Discrete Average Generator}\label{sec:objective}

\paragraph{$U$-prediction objective.}
\Eqref{eq:meanflow_identity} motivates directly regressing $U_{t,r}^\theta$ against the
stop-gradient target constructed from the right-hand side, where $\operatorname{sg}(\cdot)$
denotes the stop-gradient operator. With
$\widehat U^{\theta}_{t,r}(x_t,z^d)\triangleq Q_t(x_t,z^d)-(t-r)\big(\partial_tU^\theta_{t,r}(x_t,z^d)+(Q_tU^\theta_{t,r})(x_t,z^d)\big)$,
the $U$-prediction objective is
\begin{equation}\label{eq:u-loss}
\mathcal{L}_U(\theta) = \mathbb{E}_{x_1,(t,r),x_t\sim p_{t \mid 1}(x_t\mid x_1)}\Bigg[\sum_{d=1}^{\mathcal{D}}
\sum_{z^d \neq x_t^d}
D_F\!\bigg(
\operatorname{sg}\!\Big(\widehat U^{\theta}_{t,r}(x_t,z^d)
\Big)
\;\bigg\|\;
U_{t,r}^\theta(x_t,z^d)
\bigg)\Bigg],
\end{equation}
with $(t,r)$, $0\le t\le r\le1$, drawn from the design of Appendix~\ref{ap:impl}. However,
analogously to the continuous setting~\citep{Geng2025MeanFF}, the target in
\Eqref{eq:u-loss} depends on $U_{t,r}^\theta$ itself, introducing a self-referential bias
that can destabilize training; moreover, unlike $U_{t,r}(x_t,z^d)$, it need not be
nonnegative, as $D_F$ requires.

\paragraph{$Q$-prediction objective.}
To resolve this issue, we follow the reparameterization strategy
of~\citet{Geng2025ImprovedMF} and express $Q_t$ directly in terms of $U_{t,r}^\theta$
via the Average Generator Identity:
\begin{equation}\label{eq:Q-reparam}
Q_{t,r}^\theta(x_t, z^d) \triangleq U_{t,r}^\theta(x_t,z^d)
+ \operatorname{sg}\!\Big(
(t-r)\Big(
\partial_t U_{t,r}^\theta(x_t,z^d)
+ \big(Q_t\,U_{t,r}^\theta\big)(x_t,z^d)
\Big)\Big).
\end{equation}
Instead of regressing $U_{t,r}^\theta$ directly, we train by regressing the
reparameterized instantaneous generator $Q_{t,r}^\theta$ against the conditional
rate $Q_t(x_t, z^d|x_1)$:
\begin{equation}\label{eq:v-loss}
\mathcal{L}_Q(\theta) = \mathbb{E}_{x_1,(t,r),x_t\sim p_{t \mid 1}(x_t\mid x_1)}\Bigg[\sum_{d=1}^{\mathcal{D}}
\sum_{z^d \neq x_t^d}
D_F\!\Big(
Q_t(x_t,z^d|x_1)
\;\Big\|\;
Q_{t,r}^\theta(x_t,z^d)
\Big)\Bigg].
\end{equation}

This eliminates the self-referential dependence in the target, which in the continuous
case yields markedly more stable optimization~\citep{Geng2025ImprovedMF}.

\paragraph{$\tilde{p}$-prediction objective.}
Analogously to the $x$-prediction parameterization in the continuous
setting~\citep{Lu2026OnestepLI}, we further reparameterize $U^\theta_{t,r}$ through a
distribution $\tilde p_{t,r}(\cdot|x)$ over $\mathcal{S}$ for each coordinate, state $x$ and
pair $(t,r)$, whose dependence on $\theta$ we leave implicit. Starting from
$Q_t^d(x,z^d)=\sum_{x_1^d}p_{1|t}^d(x_1^d|x)\,Q_t^d(x^d,z^d|x_1^d)=\lambda_t\,p_{1|t}(z^d|x)$
for $z^d\neq x^d$ and modeling $U^\theta_{t,r}$ as a generator with the same
single-coordinate structure, we set
\begin{equation}\label{eq:U-param}
U_{t,r}^\theta(x,z) = \mu_{t,r}\,\tilde p_{t,r}(z^d|x),
\quad x^{\backslash d} = z^{\backslash d},\ z^d \neq x^d,
\qquad
\mu_{t,r}\triangleq\frac{\kappa_r-\kappa_t}{(r-t)(1-\kappa_t)},
\end{equation}
where $\kappa_r$ denotes the schedule of \Eqref{eq:mixture path} evaluated at $r$. The scale $\mu_{t,r}$ makes the one-step kernel of a coordinate,
$\delta_{x^d}+(r-t)U^\theta_{t,r}(x,\cdot)$, equal to
$(1-\omega_{t,r})\delta_{x^d}+\omega_{t,r}\tilde p^{d}_{t,r}(\cdot|x)$ with
$\omega_{t,r}\triangleq(r-t)\mu_{t,r}=\frac{\kappa_r-\kappa_t}{1-\kappa_t}\in[0,1]$, the
probability that a coordinate jumps to $x_1^d$ on $[t,r]$ under the conditional path.
It satisfies
$\mu_{t,r}\to\lambda_t$ as $r\to t$, so the boundary condition $U^\theta_{t,t}=Q_t$ holds
whenever $\tilde p_{t,t}=p_{1|t}$. Directly substituting \Eqref{eq:U-param} into \Eqref{eq:Q-reparam} formally gives an objective in
 $\tilde p_{t,r}$, but its transport term $(Q_tU^\theta_{t,r})(x_t,z^d)$ poses  several obstacles. In Section \ref{sec:tractable}, we first project the transport term onto per-coordinate marginals and adopt a closed-form expression and a one-pass unbiased estimator to reduce the computational costs.

\subsection{Tractable Training Objective of Discrete Average Generator}\label{sec:tractable}

Under the coordinate-wise construction
\Eqref{eq:coordinate-wise conditional path}, $Q_t(x,y)=0$ unless $x$ and $y$ differ
in exactly one coordinate, so $Q_t$ is determined by the $\mathcal{D}\times|\mathcal{S}|$
numbers $\lambda_t\,p_{1|t}(z^d|x)$ per state, which we obtain from a frozen pretrained
denoiser, and $U^\theta_{t,r}$ has the same structure by \Eqref{eq:U-param}. We therefore
model $\tilde p_{t,r}$ with a network $H^\theta$ that maps $(x,t,r)$ to $\mathcal{D}$
distributions over $\mathcal{S}$, $\tilde p_{t,r}(z^d|x)=[H^\theta(x,t,r)]_{d,z^d}$, so that a
single forward pass gives every entry of $U^\theta_{t,r}$ at the current state.

\paragraph{Per-coordinate projection.}
Let $x^{d\to s}$ denote the state $x$ with
its $d$-th coordinate replaced by $s$. For each coordinate $d$, the
\emph{per-coordinate projection} $\Pi^d$ is the linear map that marginalizes the target
state of a matrix onto its $d$-th coordinate: it takes a matrix $M$ indexed by pairs of
states $(x,y)\in\mathcal{S}^{\mathcal{D}}\times\mathcal{S}^{\mathcal{D}}$ and returns the matrix $\Pi^d[M]$ indexed by
$(x,z^d)\in\mathcal{S}^{\mathcal{D}}\times\mathcal{S}$,
\begin{equation}\label{eq:projection}
\Pi^{d}[M](x,z^{d})\;\triangleq\!\!\sum_{y\in\mathcal{S}^{\mathcal{D}}:\;y^{d}=z^{d}}\!\!M(x,y),
\qquad z^d\neq x^d,
\end{equation}
i.e.\ $\Pi^d[M]=ME_d$ with $E_d(y,z^d)\triangleq\mathbbm{1}[y^d=z^d]$, which sums the row
$M(x,\cdot)$ over all target states whose $d$-th coordinate equals $z^d$, whatever the
other coordinates are. For a transition kernel, $\Pi^d[M](x,\cdot)$ is the law of the
$d$-th coordinate after the transition. For a generator, $\Pi^d[M](x,z^d)$ is the total
rate at which coordinate $d$ moves to $z^d$. For a generator that changes only one coordinate at a time, the
d-th coordinate transition is given by a single matrix entry, $\Pi^{d}[Q_t](x,z^d)=Q_t(x,x^{d\to z^d})=\lambda_t\,p_{1|t}(z^d|x)$ and
$\Pi^{d}[U^\theta_{t,r}](x,z^d)=\mu_{t,r}\tilde p_{t,r}(z^d|x)$. For the exact average generator, however, the
d-th coordinate transition is given  by the
d-th marginal of the jump kernel:
$\Pi^{d}[U_{t,r}](x,z^d)=p^{d}_{r|t}(z^d|x)/(r-t)$, which is exactly the quantity required by a coordinate-factorized sampler. In contrast, the entry $U_{t,r}(x,x^{d\to z^d})$ is proportional to the probability that coordinate $d$ jumps to $z^d$ while all other coordinates remain unchanged, with proportionality factor $1/(r-t)$. Applying $\Pi^d$ to
\Eqref{eq:meanflow_identity} gives, for every $d$ and $z^d\neq x^d$, the projected identity
underlying Section~\ref{sec:objective},
\begin{align}\label{eq:projected-identity}
\Pi^{d}[U_{t,r}](x,z^d)&=\Pi^{d}[Q_t](x,z^d)-(t-r)\Big(\partial_t\Pi^{d}[U_{t,r}](x,z^d)+\Pi^{d}[Q_tU_{t,r}](x,z^d)\Big)\\ \notag
&=\Pi^{d}[Q_t](x,z^d)-(t-r)\Big(\partial_t\Pi^{d}[U_{t,r}](x,z^d)+\sum_{x'}Q_t(x,x')\,\Pi^{d}[U_{t,r}](x',z^d)\Big).
\end{align}
Here we use that $\Pi^d$ is
linear and commutes with $\partial_t$, and that $\Pi^{d}[Q_tM](x,z^d)=\sum_{x'}Q_t(x,x')\,\Pi^{d}[M](x',z^d)$.
The right-hand side involves $U_{t,r}$ only through $\Pi^{d}[U_{t,r}]$ at $x$ and at its
single-coordinate neighbors. The projection therefore gives the transport term a
well-defined per-coordinate entry and absorbs the two-coordinate moves by summing over them.
We call $\Pi^{d}[Q_tU_{t,r}]$ the \emph{projected transport term}.

\paragraph{Tractable projected transport term.}
The projected identity \Eqref{eq:projected-identity} is a \emph{closed} equation for the
per-coordinate marginals: the
unique solution of \Eqref{eq:projected-identity} is the exact marginal $\Pi^{d}[U_{t,r}]$, which is proved in Proposition~\ref{prop:fixed-point}. A sufficiently expressive model trained with
\Eqref{eq:v-loss} therefore learns the per-coordinate marginals of the finite-step kernel,
using only the instantaneous rate and quantities at the current state, without simulating
the kernel over $[t,r]$. It remains to evaluate the projected transport term for the
parameterization \Eqref{eq:U-param}. We show in Proposition~\ref{prop:tractable-QU} that
\begin{equation}\label{eq:commutator}
\Pi^{d}\big[Q_t\,U_{t,r}^\theta\big](x_t,z^{d})
=\lambda_t\,\mu_{t,r}\Big[\,C^{d}_{\theta}(x_t,z^{d})-p_{1|t}(z^{d}\mid x_t)\Big],
\end{equation}
where
\begin{equation}\label{eq:C-term}
  C^{d}_{\theta}(x_t,z^{d})
  \;\triangleq\;
  \sum_{e=1}^{\mathcal{D}}
  \Big(
  \mathbb{E}_{s\sim p^{e}_{1\mid t}(\cdot\mid x_t)}
  \big[\tilde p_{t,r}\big(z^{d}\mid x_t^{e\to s}\big)\big]
  -\tilde p_{t,r}\big(z^{d}\mid x_t\big)
  \Big)
\end{equation}
measures how the prediction at coordinate $d$ moves when a single coordinate of $x_t$ jumps
according to $Q_t$. Each summand requires evaluating the network at a state that differs from
$x_t$ in a single coordinate, producing an output of size $\mathcal{D}\times|\mathcal{S}|$. Thus, evaluating the sum exactly requires one forward pass for each such state, motivating the estimator introduced below.
Substituting \Eqref{eq:commutator} and
$\partial_t\mu_{t,r}=\frac{\mu_{t,r}-\lambda_t}{r-t}+\lambda_t\mu_{t,r}$ into
\Eqref{eq:Q-reparam}, the purely local terms combine and the reparameterized instantaneous
rate becomes
\begin{equation}\label{eq:At-tractable}
Q_{t,r}^\theta(x_t,z^{d})
= \lambda_t\,\tilde p_{t,r}(z^{d}\mid x_t)
+ \operatorname{sg}\Big\{(t-r)\,\mu_{t,r}\Big[
      \partial_t\tilde p_{t,r}(z^{d}\mid x_t)
      +\lambda_t\,\Gamma^{d}_{\theta}(x_t,z^{d})\Big]\Big\},
\end{equation}
with
\begin{equation}\label{eq:Gamma}
\Gamma^{d}_{\theta}(x_t,z^{d})\;\triangleq\;C^{d}_{\theta}(x_t,z^{d})
+\tilde p_{t,r}(z^{d}\mid x_t)-p_{1|t}(z^{d}\mid x_t).
\end{equation}
Writing \Eqref{eq:At-tractable} with $\lambda_t\tilde p_{t,r}$ outside the
stop-gradient rescales the gradient at each $(t,r)$ by $\lambda_t/\mu_{t,r}$ relative to
\Eqref{eq:Q-reparam}, which leaves the fixed point unchanged. For the linear schedule
$\kappa_t=t$ the two coincide, since $\mu_{t,r}=\lambda_t$. The correction
therefore consists of exactly two terms: the partial time derivative of the network at
fixed $r$ and $\lambda_t\Gamma^{d}_{\theta}$, which contains the projected transport term: by \Eqref{eq:commutator} and \Eqref{eq:Gamma}, $\lambda_t\mu_{t,r}\Gamma^{d}_{\theta}=\Pi^{d}[Q_tU^\theta_{t,r}](x_t,z^d)+\lambda_t\mu_{t,r}\tilde p_{t,r}(z^d\mid x_t)$, where the last term comes from $\partial_t\mu_{t,r}$. At $r=t$ both vanish and \Eqref{eq:At-tractable} reduces to
the instantaneous rate $Q^\theta_{t,t}(x_t,z^d)=\lambda_t\,\tilde p_{t,t}(z^d\mid x_t)$, which equals $Q_t(x_t,z^d)$ when $\tilde p_{t,t}=p_{1|t}$, so the objective is a strict
generalization of \Eqref{eq:training-posterior}.

$C^{d}_{\theta}$ is estimated without bias by a single forward pass shared across
all coordinates: draw $e\sim\mathrm{Unif}(\{1,\dots,\mathcal{D}\})$ and
$s\sim p^{e}_{1|t}(\cdot|x_t)$, evaluate $\tilde p_{t,r}(\cdot\mid x_t^{e\to s})$ once, and
set
\begin{equation}\label{eq:C-estimator}
  \widehat C^{d}_{\theta}(x_t,z^{d})
  =\mathcal{D}\,\Big(\tilde p_{t,r}\big(z^{d}\mid x_t^{e\to s}\big)
  -\tilde p_{t,r}\big(z^{d}\mid x_t\big)\Big).
\end{equation}
Every summand of $C^{d}_{\theta}$, as well as $\tilde p_{t,r}-p_{1|t}$ and
$\partial_t\tilde p_{t,r}$ sums to zero
over $z^d$. Therefore, the shifted probability
\begin{equation}\label{eq:data-posterior}
    \hat q^{d}_{\theta}(z^d|x_t)\triangleq\tilde p_{t,r}(z^d|x_t)+\operatorname{sg}\{\frac{(t-r)\mu_{t,r}}{\lambda_t}[\partial_t\tilde p_{t,r}(z^d|x_t)+\lambda_t\Gamma^{d}_{\theta}(x_t,z^d)]\},
\end{equation}
defined for all $z^d\in\mathcal{S}$, satisfies $\sum_{z^d\in\mathcal{S}} \hat q^{d}_{\theta}(z^d\mid x_t)=1$. Moreover, for $z^d\neq x_t^d$, $\hat q^{d}_{\theta}(z^d|x_t)=Q^\theta_{t,r}(x_t,z^d)/\lambda_t$. Expanding $D_F$ in
\Eqref{eq:v-loss} (Appendix~\ref{ap:dag-objective-proof}), its linear part becomes
$\lambda_t(1-\hat q^{d}_{\theta}(x_t^d|x_t))$, and the objective takes the form of
\Eqref{eq:training-posterior} with the posterior replaced by the shifted probability
$\hat q^{d}_{\theta}$,
\begin{equation}\label{eq:loss-general}
  \mathcal{L}_{\tilde{p}}(\theta)
  =\mathbb{E}_{x_1,(t,r),x_t}\Bigg[\lambda_t\sum_{d=1}^{\mathcal{D}}\Big\{
  -\big(1-\delta_{x_1^d}(x_t^d)\big)\log\hat q^{d}_{\theta}(x_1^{d}|x_t)
  -\hat q^{d}_{\theta}(x_t^{d}|x_t)\Big\}\Bigg]+\mathrm{const}.
\end{equation}

\begin{algorithm}[t]
\caption{Training of Discrete Average Generator}
\label{alg:dag-training}
\begin{algorithmic}[1]
\Require Network $H^\theta$, schedule $\kappa_t$, data distribution $p_1$,
  frozen denoiser $H$, boundary fraction $1-\rho$
\Repeat
    \State Sample $x_1 \sim p_1$, $(t,r)$ with $t\le r$ from the logit-normal; set $r \gets t$ with probability $1-\rho$
    \State Sample $x_t \sim p_{t|1}(\cdot \mid x_1)$ \Comment{mixture path}
    \State Compute $\tilde p_{t,r}(\cdot \mid x_t) \gets H^\theta(x_t,t,r)$, $\partial_t \tilde p_{t,r}(z^d \mid x_t)$ at \emph{fixed} $r$
    \Comment{$\mathcal{D}\times|\mathcal{S}|$}
    \State Compute $p_{1|t}(\cdot \mid x_t) \gets H(x_t,t)$
    \Comment{frozen, stop-gradient}
    \State Sample $e \sim \mathrm{Unif}(\{1,\dots,\mathcal{D}\})$, $s \sim p_{1|t}^e(\cdot\mid x_t)$; evaluate $\tilde p_{t,r}(\cdot \mid x_t^{e\to s})$
    \State Compute $\widehat{C}^d_{\theta}$ via \Eqref{eq:C-estimator} and $\Gamma^d_\theta$ via \Eqref{eq:Gamma}
    \State Compute the shifted probability $\hat q^{d}_{\theta}$ with stop-gradient via \Eqref{eq:data-posterior}
    \State Compute $\mathcal{L}_{\tilde{p}}(\theta)$ via \Eqref{eq:loss-general}
    \State Update $\theta \gets \theta - \eta \nabla_\theta \mathcal{L}_{\tilde{p}}(\theta)$
\Until{converged}
\end{algorithmic}
\end{algorithm}
Algorithm~\ref{alg:dag-training} summarizes training algorithm. The instantaneous posterior
$p_{1|t}$ is supplied by a frozen pretrained denoiser, which is also the model the network
is initialized from; the boundary condition $U^\theta_{t,t}=Q_t$ gives the same quantity when
no pretrained model is available, since a fraction $1-\rho$ of the training pairs are drawn
with $r=t$.

\paragraph{Sampling.}
Sampling takes one Euler step of the learned
per-coordinate average generator $\Pi^{d}[U^\theta_{t,r}]$ independently for every coordinate per interval. On a
grid $0=\tau_0<\dots<\tau_K=1$, each step queries the network once at
$(x_t,t,r)=(x_{\tau_k},\tau_k,\tau_{k+1})$ and, independently for every coordinate,
replaces it by a draw from $\tilde p^{d}_{t,r}(\cdot\mid x_t)$ with probability
$\omega_{t,r}=(r-t)\mu_{t,r}=\frac{\kappa_r-\kappa_t}{1-\kappa_t}$. The detailed sampling procedure is given in Algorithm~\ref{alg:dag-sampling}.

\begin{figure}[t]
\centering
\begin{minipage}[t]{0.47\linewidth}
  \centering
  \includegraphics[width=0.8\linewidth]{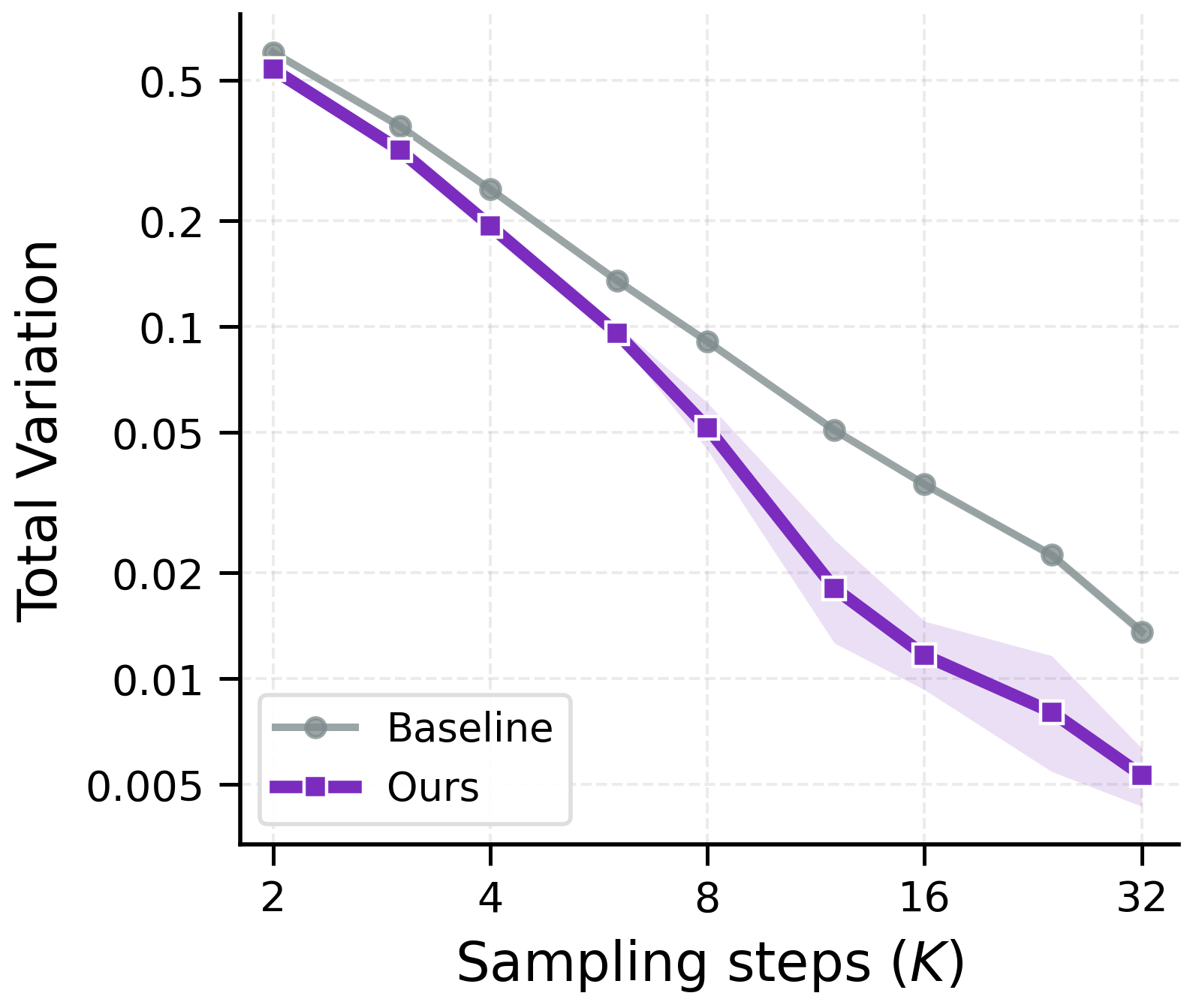}
  \caption{Exact total variation between the target and the law of the $K$-step sampler on
  the Potts simulation. Values in Table~\ref{tab:sim-full}.}
  \label{fig:simulations}
\end{minipage}\hfill
\begin{minipage}[t]{0.47\linewidth}
  \centering
  \includegraphics[width=0.8\linewidth]{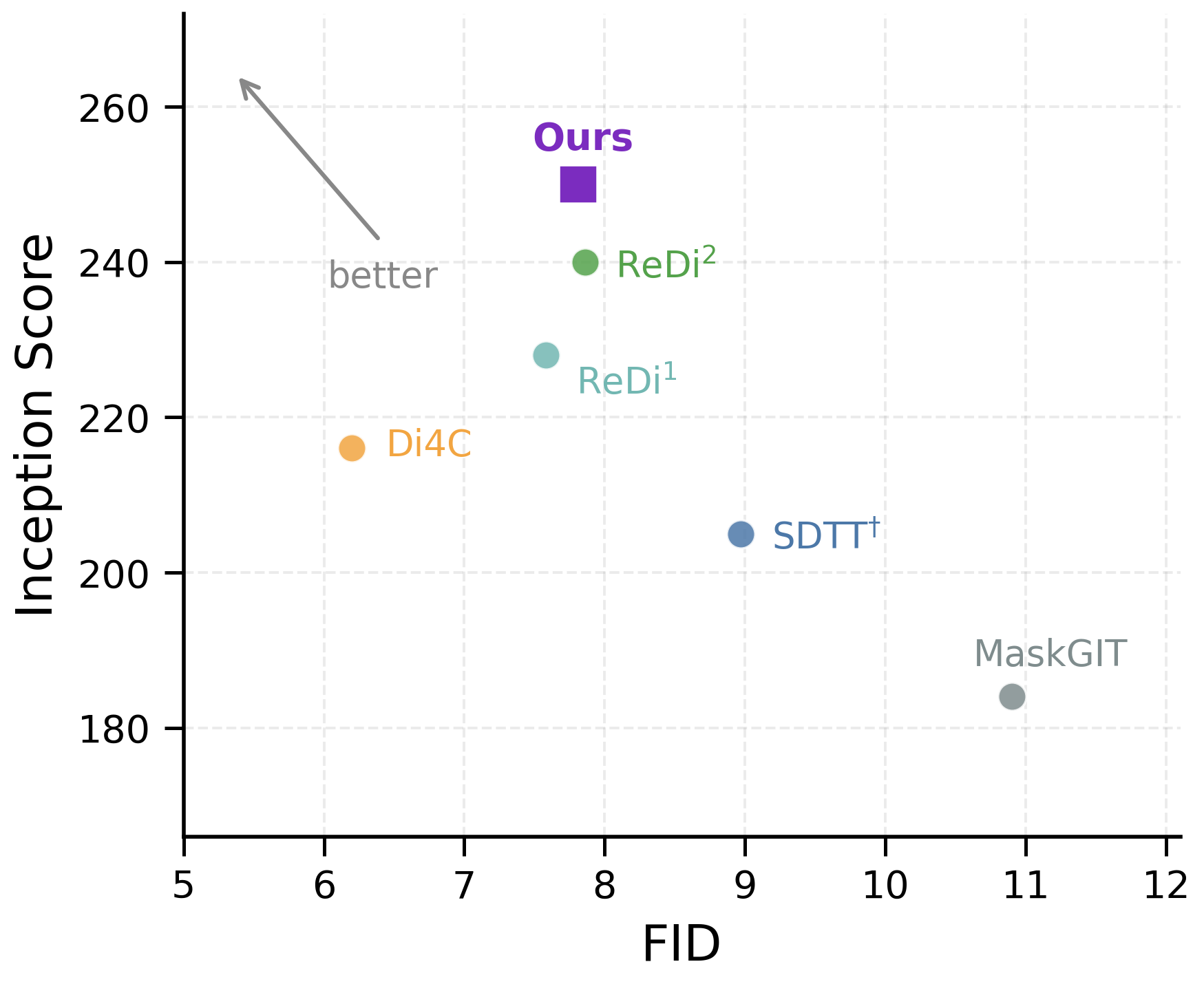}
  \caption{Experimental results on the ImageNet dataset for image generation. Values in Table~\ref{tab:main_results}.}
  \label{fig:imagenet}
\end{minipage}
\end{figure}
\section{Experimental Results}\label{sec:exper}
\subsection{Experimental Setup}

\paragraph{Datasets.}
We evaluate our method on simulated data and two standard benchmarks: OpenWebText~\citep{Gokaslan2019OpenWeb} for text generation and ImageNet~\citep{deng2009imagenet} for image generation. For the simulations, we use a Potts target on $\mathcal{S}^{\mathcal{D}}$
with $|\mathcal{S}|=\mathcal{D}=4$, a state space small enough to be enumerated exactly. This allows us to evaluate the sampler against the exact ground truth in terms of total variation distance. Details of the data generation procedure are provided in Appendix~\ref{ap:sim-exact}. For text generation, we use sequences of length 1024, tokenized with the GPT-2 tokenizer~\citep{Radford2019LanguageMA}, which has a vocabulary size of 50,257. For image generation, following prior work~\citep{chang2022maskgit,hayakawa2025distillation}, we tokenize $256 \times 256$ images into $16 \times 16$ discrete latent codes using a pretrained VQGAN~\citep{Esser2020TamingTF}, resulting in sequences of length 256 with a vocabulary size of 1,024.

\paragraph{Baselines.}

We compare with SDTT~\citep{deschenaux2024auto}
and Di4C~\citep{hayakawa2025distillation}, which distill a
multi-step teacher and, for Di4C, model the correlations between dimensions, and with
ReDi~\citep{Yoo2025ReDiRD}, which iteratively rectifies the source--data coupling
(ReDi$^{k}$: $k$ rounds; Di4C+ReDi$^{1}$ combines both). Our method does not distill a
student from a teacher trajectory.

\paragraph{Implementation.}
On OpenWebText, we
train the model initialized from Di4C+ReDi$^{1}$ with \Eqref{eq:v-loss} under the parameterization \Eqref{eq:duo-U} and the estimator
of \Eqref{eq:transport-general} (Appendix~\ref{ap:duo}) for 1k
iterations. On ImageNet, we train the model initialized from ReDi$^{2}$ with \Eqref{eq:loss-masked} and the single-draw
estimator \Eqref{eq:C-estimator-masked} (Appendix~\ref{ap:masked}) for 3
epochs.
Further details can be found in Appendix~\ref{ap:impl}.

\paragraph{Metrics.}
For simulated data, we report the total variation distance between the generated distribution and the ground-truth distribution. For text generation, we measure generative perplexity using LLaMA 3.1-8B~\citep{grattafiori2024llama} and report entropy to assess diversity and degeneration. For image generation, we generate 50k samples and report Fréchet Inception Distance (FID)~\citep{Heusel2017GANs} and Inception Score (IS)~\citep{Tim2016improved}.

\subsection{Experimental Results}
\label{sec:sim}

\paragraph{Simulations}
Figure~\ref{fig:simulations} compares a network trained
with the baseline objective \Eqref{eq:training-posterior} with one trained with \Eqref{eq:loss-general}. The correction
reduces the total variation for every number of sampling steps $K$, by $21\%$ at $K=4$,
$43\%$ at $K=8$ and $67\%$ at $K=16$, with the same ordering for all three seeds.

\paragraph{Text Generation}
Figure~\ref{fig:openweb} reports generative perplexity
and entropy on OpenWebText. From $K=8$ on, our method improves on the strongest baseline,
Di4C+ReDi$^{1}$: $27.59$ vs.\ $30.92$ at $K=8$, $22.92$ vs.\ $24.81$ at $K=16$, $21.35$ vs.\ $21.88$ at
$K=32$, and $16.97$ vs.\ $20.32$ at $K=64$. Most importantly, our method achieves a lower generative perplexity with $K=64$ than Di4C+ReDi$^{1}$ with $K=1024$, demonstrating a $16\times$ acceleration. Qualitative inspection of unconditional samples generated at $K=64$ (Appendix~\ref{app:samples_k64}) further suggests that our method produces fluent, topically coherent text at this reduced step budget, without the repetitive degeneration that can mask a low generative perplexity.

\begin{figure}
    \centering
    \includegraphics[width=0.6\linewidth]{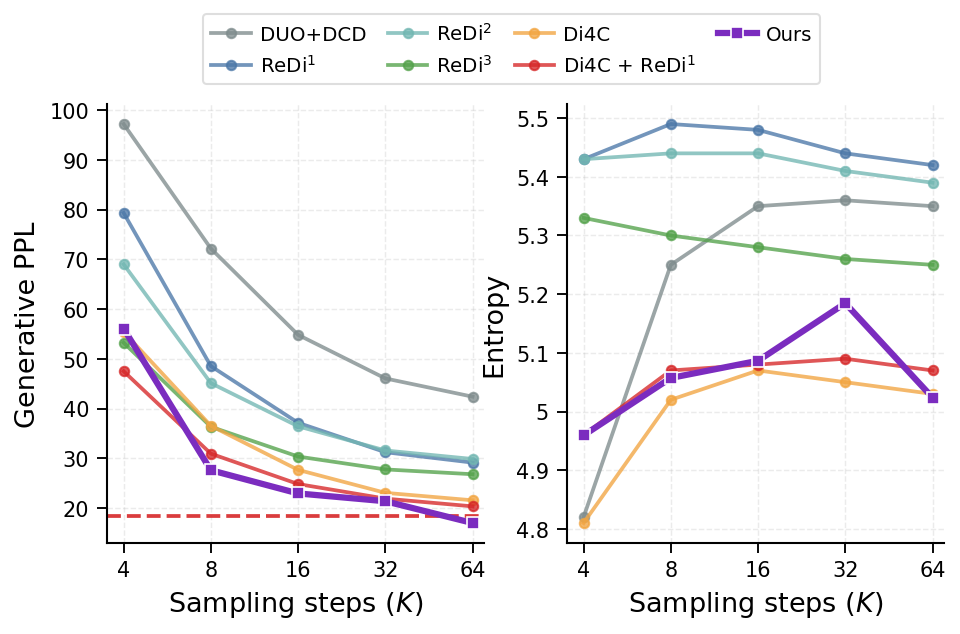}%
    \caption{Generative perplexity and entropy on
    OpenWebText. The dashed horizontal line indicates the PPL of Di4C + ReDi$^1$ at $K=1024$ (18.44).
    Values in Table~\ref{tab:openwebtext}.}
    \label{fig:openweb}
\end{figure}

\paragraph{Image Generation}

Figure~\ref{fig:imagenet} compares the methods on
ImageNet $256\times256$ with four network evaluations. Our method attains the highest Inception Score, $250$, $4\%$ above the next best
 (ReDi$^{2}$, $240$), at a comparable FID ($7.81$ vs.\ $7.86$).
The ablation of classifier-free guidance can be found in Appendix~\ref{ap:imagenet-cfg}
 and qualitative comparisons are provided in
Appendix~\ref{ap:imagenet-qual}.

\section{Related Work}
In continuous spaces, sampling is accelerated by higher-order
solvers~\citep{Song2020DenoisingDI,Lu2022DPMSolverAF}, caching~\citep{ma2023deepcache},
distillation~\citep{salimans2022progressive,zhou2024score}, consistency
models~\citep{Song2023ConsistencyM}, rectified flow~\citep{Liu2022FlowSA}, and methods that
learn the solution operator over an interval, including shortcut models~\citep{frans2025one},
MeanFlow~\citep{Geng2025MeanFF,Geng2025ImprovedMF,Lu2026OnestepLI}. For discrete diffusion,
SDTT~\citep{deschenaux2024auto} and Di4C~\citep{hayakawa2025distillation} distill
multi-step samplers, ReDi~\citep{Yoo2025ReDiRD} rectifies the coupling,
Di[M]O~\citep{Zhu2025DiMODM} distills a one-step generator by distribution matching, and
Fast-dLLM~\citep{Wu2026FastdLLMTA} and Jacobi Forcing~\citep{Hu2026FastAA} accelerate
inference. Closest to our work, Categorical Flow Maps~\citep{Roos2026CategoricalFM} and
Discrete Flow Maps~\citep{Potaptchik2026DiscreteFM} learn flow maps of a \emph{continuous}
interpolant on the probability simplex. We instead stay on the discrete state space and
learn the average generator of a CTMC, so that the transport term is left multiplication
by the generator rather than a Jacobian--vector product along a drift, and the
construction applies to any CTMC with coordinate-wise rates.
Appendix~\ref{ap:related} provides a more detailed discussion of related work.

\section{Conclusion}
We have introduced the Discrete Average Generator, a principled extension of MeanFlow
to discrete state spaces governed by CTMCs. By defining the average
generator as the normalized increment of the transition kernel, we derived the Discrete
Average Generator Identity and the corresponding training objective. Projected onto per-coordinate marginals,
the identity becomes a closed equation whose transport term can be estimated without bias with one additional forward pass. Our method reduces the total variation of the $K$-step sampler
by up to $67\%$ in simulations, attains the lowest generative perplexity of all compared
methods on OpenWebText for every budget of $8$ to $64$ steps with 16 $\times$ acceleration, and on ImageNet with
the highest Inception Score.

\bibliography{references}

\appendix
\section{Detailed Discussion of Related Work}\label{ap:related}
\paragraph{Acceleration for Continuous Diffusion Models and Flow Matching.}
DDIM~\citep{Song2020DenoisingDI} and DPM-Solver~\citep{Lu2022DPMSolverAF} leverage
deterministic or higher-order ODE solvers to speed up sampling of diffusion models
and flow matching. Caching-based methods, such as DeepCache~\citep{ma2023deepcache},
reuse intermediate computations across timesteps to further improve efficiency.
Another line of approaches focuses on distilling the original multi-step generation
process into fewer steps. Progressive distillation~\citep{salimans2022progressive}
iteratively compresses the sampling trajectory, while score identity
distillation~\citep{zhou2024score} and consistency models~\citep{Song2023ConsistencyM}
enforce consistency across different noise levels to enable one- or few-step generation.
Rectified flow~\citep{Liu2022FlowSA} provides another perspective by reformulating the
generative process as a straightened flow, which naturally reduces the complexity of
sampling. Shortcut models~\citep{frans2025one} and MeanFlow~\citep{Geng2025MeanFF}
introduce tailored modifications to the flow formulation to achieve faster generation.
In particular, MeanFlow defines an average velocity field over time intervals and
derives a self-consistent fixed-point identity, which is the direct inspiration for
our work. Improved MeanFlow~\citep{Geng2025ImprovedMF} further addresses training
instability through a velocity-based reparameterization, and pixel MeanFlow~\citep{Lu2026OnestepLI}
proposes an $x$-prediction parameterization that improves empirical stability by
directly supervising the clean data prediction.

\paragraph{Acceleration of Diffusion Language Models.}
Extending acceleration techniques to discrete diffusion models and diffusion language
models has attracted increasing attention. SDTT~\citep{deschenaux2024auto} and
Di4C~\citep{hayakawa2025distillation} accelerate discrete diffusion by distilling
multi-step trajectories into fewer steps, where Di4C additionally exploits
dimensional correlations to improve distillation quality.
ReDi~\citep{Yoo2025ReDiRD} iteratively rectifies the coupling between source
and target distributions to reduce the factorization error of discrete flow models,
enabling efficient few-step generation. Di[M]O~\citep{Zhu2025DiMODM} proposes to distill masked diffusion models
into a one-step generator via token-level distribution matching in an on-policy
framework, and introduces a token initialization strategy to inject stochasticity
into the otherwise deterministic initial distribution.
On the inference side, Fast-dLLM~\citep{Wu2026FastdLLMTA} accelerates diffusion LLM
inference without additional training by introducing a block-wise approximate KV
cache mechanism for bidirectional diffusion models and a confidence-aware parallel
decoding strategy. Jacobi Forcing~\citep{Hu2026FastAA} takes a complementary approach by
converting autoregressive LLMs into causal parallel decoders through progressive
distillation on the model's own Jacobi decoding trajectories, preserving the causal
backbone and KV cache compatibility while achieving significant wall-clock speedup.
In contrast to these approaches, our method directly learns an average generator for
CTMCs without multi-step distillation, and addresses the discrete analogue of the
MeanFlow fixed-point identity at the level of the transition rate operator.

\section{Theoretical Details for Section \ref{sec:prel}}

\subsection{Proof of Training Objective for Discrete Flow Matching} \label{ap:training objective}

\begin{proof}
We show how the Bregman divergence-based training objective
\[
\mathbb{E}\Big[\sum_{z\neq x_t} D_F\big(Q_t(x_t,z|x_1) \,\|\, Q_t(x_t,z)\big)\Big],
\quad F(x) = x \log x,
\]
reduces to a cross-entropy loss with respect to the posterior $p_{1|t}$ when using the coordinate-wise mixture path.

\textbf{Step 0: Off-diagonal restriction.}
The generator $Q_t(x,z|x_1)$ contains a negative diagonal term $-\delta_x(z)$ and is therefore a signed measure. Since the objective only sums over $z \neq x$, we restrict both conditional and marginal generators to their off-diagonal parts:
\[
Q_t^{\mathrm{off}}(x,z|x_1) := Q_t(x,z|x_1)\mathbf{1}_{z \neq x}.
\]
Under this restriction, all quantities are non-negative and the Bregman divergence is well-defined.

\textbf{Step 1: Bregman divergence.}
For $F(x)=x\log x$, the Bregman divergence is
\[
D_F(a\|b) = a \log \frac{a}{b} - a + b.
\]

\textbf{Step 2: Conditional and oracle rates.}
From the coordinate-wise mixture path, for a single coordinate $d$,
\[
Q_\text{cond}(z^d)
= Q_t^d(x_t^d,z^d|x_1^d)\mathbf{1}_{z^d \neq x_t^d}
= \lambda_t
\mathbf{1}_{\{x_1^d\neq x_t^d\}}\, \delta_{x_1^d}(z^d),
\]
where we used that the original generator contains an additional diagonal term $-\delta_{x_t^d}(z^d)$, which vanishes after restricting to $z^d \neq x_t^d$.
The corresponding oracle rate is
\[
Q_\text{oracle}(z^d) = Q_t^d(x_t^d,z^d)
= \lambda_t\, p_{1|t}^d(z^d|x_t),
\quad z^d \neq x_t^d.
\]

\textbf{Step 3: Support of the conditional rate.}
The conditional rate has support only at $z^d = x_1^d$ when $x_1^d\neq x_t^d$, and is identically zero otherwise. Hence the analysis splits into two cases.

\textbf{Step 4: KL term at the target jump ($x_1^d\neq x_t^d$).}
For $z^d = x_1^d$:
\[
Q_\text{cond}(x_1^d) \log \frac{Q_\text{cond}(x_1^d)}{Q_\text{oracle}(x_1^d)}
= \lambda_t \log \frac{\lambda_t}{\lambda_t \, p_{1|t}^d(x_1^d|x_t)}
= \lambda_t \, \big(-\log p_{1|t}^d(x_1^d|x_t)\big).
\]

\textbf{Step 5: Linear terms.}
The remaining terms in $D_F$ give
\[
-Q_\text{cond}(x_1^d) + \sum_{z^d \neq x_t^d} Q_\text{oracle}(z^d)
= - \lambda_t
+ \lambda_t \big(1 - p_{1|t}^d(x_t^d|x_t)\big).
\]

\textbf{Step 6: Case $x_1^d=x_t^d$.}
In this case, $Q_\text{cond}(z^d)=0$ for all $z^d \neq x_t^d$. Hence
\[
D_F(0\|Q_\text{oracle}(z^d)) = Q_\text{oracle}(z^d),
\]
and summing over $z^d \neq x_t^d$ yields
\[
\lambda_t (1 - p_{1|t}^d(x_t^d|x_t)).
\]

\textbf{Step 7: Combine terms.}
Combining both cases using the indicator $\delta_{x_1^d}(x_t^d)$, the Bregman divergence reduces to
\[
\lambda_t \Big\{
- (1-\delta_{x_1^d}(x_t^d)) \log p_{1|t}^d(x_1^d|x_t)
+ \delta_{x_1^d}(x_t^d) - p_{1|t}^d(x_t^d|x_t)
\Big\}.
\]
Summing over all coordinates $d=1,\dots,\mathcal{D}$, we obtain exactly the training objective
\[
\mathbb{E}\Big[\lambda_t \sum_{d=1}^{\mathcal{D}} \Big\{
- (1-\delta_{x_1^d}(x_t^d)) \log p_{1|t}^d(x_1^d|x_t)
+ \delta_{x_1^d}(x_t^d) - p_{1|t}^d(x_t^d|x_t)
\Big\}\Big].
\]
This completes the proof.
\end{proof}

\subsection{Proof of Proposition 3.1 of \citet{campbell2024generative}}
\label{appendix:prop}
We include the proof of Proposition 3.1 of \citet{campbell2024generative} here for completeness.
Let $Q_t(x,z|x_1)$ denote the conditional transition rate from state $x$ to $z$ that generates
the conditional probability path $p_{t|1}(\cdot|x_1)$, i.e., it satisfies the Kolmogorov forward equation
\[
\partial_t p_{t|1}(x|x_1)
= \sum_{z} Q_t(z,x|x_1)\, p_{t|1}(z|x_1).
\]

Taking expectation over $x_1 \sim p_1$, we obtain
\begin{align*}
\partial_t p_t(x)
&= \mathbb{E}_{x_1\sim p_{1}(x_1)}\big[\partial_t p_{t|1}(x |x_1)\big]
\quad{\small\text{(definition of $p_t$)}} \\
&= \mathbb{E}_{x_1\sim p_{1}(x_1)}\big[\sum_z Q_t(z, x | x_1) \, p_{t|1}(z | x_1)\big] \\
&= \sum_z \sum_{x_1} p_{1}(x_1)\,p_{t|1}(z | x_1)\,Q_t(z,x| x_1) \\
&= \sum_z \sum_{x_1} p_t(z)\,p_{1|t}(x_1| z)\,Q_t(z,x| x_1)
\quad{\small\text{(Bayes' rule)}} \\
&= \sum_z \mathbb{E}_{x_1\sim p_{1|t}(\cdot | z)}\big[Q_t(z,x| x_1)\big]\,p_t(z).
\end{align*}

Comparing with the Kolmogorov forward equation
\[
\partial_t p_t(x)
= \sum_z Q_t(z,x)\,p_t(z),
\]
we identify the marginal transition rate as
\[
Q_t(z,x)
= \mathbb{E}_{x_1\sim p_{1|t}(\cdot | z)}\big[Q_t(z,x|x_1)\big].
\]

Equivalently, reparameterizing the indices to express the transition rate
from $x$ to $z$, we obtain
\[
Q_t(x,z)
= \mathbb{E}_{x_1\sim p_{1|t}(\cdot | x)}\big[Q_t(x,z|x_1)\big].
\]

This shows that $Q_t(x,z)$ generates the marginal probability path $p_t(x)$.

\subsection{Proof of the MeanFlow Identity} \label{ap:meanflow_identity}

The MeanFlow identity was originally proved by \citet{Geng2025MeanFF}.
We reproduce the proof here for completeness.

Let $x_t$ follow the flow defined by $\frac{d}{dt} x_t = v(x_t, t)$.
By definition, the MeanFlow velocity is
\[
u(x_t, t, r) = \frac{1}{r - t} \int_t^r v(x_s, s)\, ds.
\]
Multiplying both sides by $(r - t)$ gives
\[
(r - t)\, u(x_t, t, r) = \int_t^r v(x_s, s)\, ds.
\]
Taking the total derivative with respect to $t$ and applying the
fundamental theorem of calculus yields
\[
\frac{d}{dt} \big[(t - r)\, u(x_t, t, r)\big] = v(x_t, t).
\]
Expanding the left-hand side using the product rule, we obtain
\[
(t - r)\, \frac{d}{dt} u(x_t, t, r) + u(x_t, t, r) = v(x_t, t).
\]
Rearranging terms leads to the MeanFlow identity
\[
u(x_t, t, r) = v(x_t, t) - (t - r)\, \frac{d}{dt} u(x_t, t, r).
\]

Here, the total derivative is given by
\[
\frac{d}{dt} u(x_t, t, r)
= \partial_t u(x_t, t, r)
+ v(x_t, t) \cdot \nabla_{x_t} u(x_t, t, r).
\]

\section{Theoretical Details}\label{ap:theoretical}

\subsection{Notation and the material derivative of a CTMC}
\label{ap:material}

Throughout this appendix, $x^{d\to s}$ denotes the state obtained from $x$ by replacing
its $d$-th coordinate with $s$, matrices are indexed by states and act on the right on
row-vectors of probabilities, and $P_{t\to r}$ denotes the propagator of the CTMC,
$P_{t\to r}(x,y)=p_{r|t}(y|x)$, which is the unique solution of
\begin{equation}
\label{eq:kolmogorov-fwd-bwd}
  \underbrace{\partial_r P_{t\to r}=P_{t\to r}\,Q_r}_{\text{forward}},
  \qquad
  \underbrace{\partial_t P_{t\to r}=-\,Q_t\,P_{t\to r}}_{\text{backward}},
  \qquad P_{t\to t}=I .
\end{equation}
For each coordinate $e$ define the row-stochastic matrix $P^{e}_{t}$ that resamples
coordinate $e$ from the posterior,
$P^{e}_{t}(x,x^{e\to s})\triangleq p^{e}_{1|t}(s|x)$ for all $s\in\mathcal{S}$ (the
diagonal $s=x^e$ included) and $P^{e}_{t}(x,y)=0$ otherwise. The coordinate-wise marginal
generator of Section~\ref{sec:dag} is then exactly
\begin{equation}
\label{eq:Q-operator-form}
  Q_t=\lambda_t\sum_{e=1}^{\mathcal{D}}\big(P^{e}_{t}-I\big),
  \qquad
  U^{\theta}_{t,r}=\mu_{t,r}\sum_{d=1}^{\mathcal{D}}\big(\tilde P^{d,\theta}_{t,r}-I\big),
\end{equation}
with $\tilde P^{d,\theta}_{t,r}(x,x^{d\to s})=\tilde p^{d}_{t,r}(s|x)$. Indeed, for
$z^{d}\neq x^{d}$ the $(x,x^{d\to z^{d}})$ entry of \Eqref{eq:Q-operator-form} is
$\lambda_t p_{1|t}(z^{d}|x)$, the diagonal is
$\lambda_t\sum_{e}(p^{e}_{1|t}(x^{e}|x)-1)$, and all entries at Hamming distance $\ge2$
vanish; in particular $Q_t$ is \emph{single-site supported}.

MeanFlow's identity is obtained by differentiating $(t-r)u(x_t,t,r)$ \emph{along the
flow}, which produces the material derivative $\frac{d}{dt}=\partial_t+v\cdot\nabla_x$.
The correct discrete counterpart is supplied by Dynkin's formula. Let $(X_t)$ be the CTMC
with generator $Q_t$ and let $f_t$ be a time-dependent function on
$\mathcal{S}^{\mathcal{D}}$, viewed as a column vector, which is $C^1$ in $t$. Then
\[
  M_t \;\triangleq\; f_t(X_t)-\int_{0}^{t}\big(\partial_s f_s+Q_s f_s\big)(X_s)\,ds
\]
is a martingale, and consequently
$\frac{d}{dt}\mathbb{E}[f_t(X_t)]=\mathbb{E}[(\partial_t f_t+Q_t f_t)(X_t)]$. We therefore
define the \emph{material derivative} along the CTMC as
\begin{equation}
\label{eq:material-derivative}
  \frac{D}{Dt}\;\triangleq\;\partial_t+Q_t\,\cdot,
\end{equation}
where $Q_t\,\cdot$ denotes left multiplication (applied columnwise for matrix-valued
fields). This is the discrete analogue of $v\cdot\nabla_x$: the generator \emph{is} the
directional derivative along the process.

\subsection{Proof of Proposition \ref{prop:agi}}\label{ap:proof-prop1}

We first record the basic properties of $U_{t,r}=\frac{P_{t\to r}-I}{r-t}$ defined in
\Eqref{eq:U-def}.

\begin{proposition}[Basic properties of the average generator]
\label{prop:U-properties}
For $0\le t<r\le1$:
\emph{(a)} $U_{t,r}(x,z)\ge0$ for all $z\neq x$ and $\sum_{z}U_{t,r}(x,z)=0$, i.e.\
$U_{t,r}$ is itself a valid generator;
\emph{(b)} $\delta_x^{\top}(I+(r-t)U_{t,r})=p_{r|t}(\cdot|x)$ for every $x$, i.e.\ a
single Euler step of size $r-t$ reproduces the exact transition law;
\emph{(c)} $\lim_{r\to t}U_{t,r}=Q_t$.
\end{proposition}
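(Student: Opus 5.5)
The three claims follow from elementary properties of the propagator $P_{t\to r}$ recorded in \Eqref{eq:kolmogorov-fwd-bwd}. I will treat (a), (b), (c) in order.

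For \emph{(a)}, $P_{t\to r}$ is a row-stochastic matrix, since $P_{t\to r}(x,y)=p_{r|t}(y|x)$ is a conditional law. For $z\neq x$ this gives $U_{t,r}(x,z)=P_{t\to r}(x,z)/(r-t)\ge0$, using $r-t>0$ and $I(x,z)=0$. The row sums are
\[
\sum_z U_{t,r}(x,z)=\frac{\sum_z P_{t\to r}(x,z)-1}{r-t}=\frac{1-1}{r-t}=0.
\]
If one does not want to take stochasticity of the solution of \Eqref{eq:kolmogorov-fwd-bwd} as given, two short checks suffice. First, $\partial_r(P_{t\to r}\mathbf 1)=P_{t\to r}Q_r\mathbf 1=0$ with $P_{t\to t}\mathbf 1=\mathbf 1$, so row sums stay equal to one. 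Second, nonnegativity follows from the standard fact that a generator with nonnegative off-diagonal entries yields a nonnegative propagator. To see this, write $Q_r=-c_r I+(Q_r+c_rI)$ with $c_r\ge\max_x|Q_r(x,x)|$, so that $Q_r+c_rI$ is entrywise nonnegative; the Dyson/Peano series for the forward equation then has only nonnegative terms.

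For \emph{(b)}, the claim is immediate from the definition: $I+(r-t)U_{t,r}=P_{t\to r}$, so $\delta_x^\top P_{t\to r}$ is the row $P_{t\to r}(x,\cdot)=p_{r|t}(\cdot|x)$.

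For \emph{(c)}, I would use $P_{t\to t}=I$ together with the forward equation. Writing $P_{t\to r}-I=\int_t^r P_{t\to s}Q_s\,ds$ gives
\[
U_{t,r}-Q_t=\frac{1}{r-t}\int_t^r\big(P_{t\to s}Q_s-Q_t\big)\,ds .
\]
On the finite state space, continuity of $s\mapsto Q_s$ (part of the smoothness assumed for Proposition~\ref{prop:agi}) implies continuity of $s\mapsto P_{t\to s}$, so the integrand tends to $P_{t\to t}Q_t-Q_t=0$ as $s\to t$. The average therefore tends to $0$ entrywise, which also confirms the convention $U_{t,t}=Q_t$.

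This step carries the only real analytic content: it needs the regularity of $Q$, specifically continuity in time and boundedness on a finite space, and I would state that assumption explicitly. Everything else in the proposition is bookkeeping.
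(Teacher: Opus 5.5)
Your proposal is correct and follows the paper's proof: (a) comes from row-stochasticity of $P_{t\to r}$, (b) directly from $I+(r-t)U_{t,r}=P_{t\to r}$, and (c) from identifying the limit with $\partial_r P_{t\to r}|_{r=t}=P_{t\to t}Q_t=Q_t$ via the forward equation. Your integral form $P_{t\to r}-I=\int_t^r P_{t\to s}Q_s\,ds$ makes explicit the continuity of $Q$ that the paper's one-line derivative argument uses implicitly, and your check that $P_{t\to r}$ is stochastic is extra detail the paper takes for granted.
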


\begin{proof}
(a) $P_{t\to r}$ is row-stochastic, so its off-diagonal entries are nonnegative and its
rows sum to one; subtracting $I$ and dividing by $r-t>0$ preserves nonnegativity off the
diagonal and yields zero row sums. (b) Immediate from
$\delta_x^\top P_{t\to r}=p_{r|t}(\cdot|x)$. (c)
$\lim_{r\to t}\frac{P_{t\to r}-I}{r-t}=\partial_rP_{t\to r}|_{r=t}=Q_t$ by
\Eqref{eq:kolmogorov-fwd-bwd}.
\end{proof}

\begin{proof}[Proof of Proposition~\ref{prop:agi}]
Let $\Omega_{t,r}\triangleq(t-r)\,U_{t,r}=I-P_{t\to r}$. By the backward equation in
\Eqref{eq:kolmogorov-fwd-bwd},
\[
  \partial_t\Omega_{t,r}
  =-\partial_tP_{t\to r}
  =Q_t\,P_{t\to r}
  =Q_t\big(I-\Omega_{t,r}\big)
  =Q_t-Q_t\,\Omega_{t,r},
\]
that is $\partial_t\Omega_{t,r}+Q_t\Omega_{t,r}=Q_t$, which is exactly
$\frac{D}{Dt}[(t-r)U_{t,r}]=Q_t$ with the material derivative
\Eqref{eq:material-derivative}. Expanding the left-hand side with the product rule,
\[
  \partial_t\Omega_{t,r}=U_{t,r}+(t-r)\,\partial_tU_{t,r},
  \qquad
  Q_t\Omega_{t,r}=(t-r)\,Q_tU_{t,r},
\]
and substituting gives $U_{t,r}+(t-r)(\partial_tU_{t,r}+Q_tU_{t,r})=Q_t$, i.e.\
\Eqref{eq:meanflow_identity}. The boundary case $r=t$ reduces to $U_{t,t}=Q_t$,
consistent with Proposition~\ref{prop:U-properties}(c).
\end{proof}

\subsection{Marginal projection and the tractable \texorpdfstring{projected}{projected} transport term}
\label{ap:projection}

The exact $U_{t,r}$ of \Eqref{eq:U-def} is \emph{not} single-site supported (a
finite-time propagator moves many coordinates at once), whereas the model
\Eqref{eq:Q-operator-form} is. We therefore enforce \Eqref{eq:meanflow_identity} after
projecting onto per-coordinate marginals with $\Pi^{d}$ of \Eqref{eq:projection}, which is
precisely the information a coordinate-factorized sampler consumes.

\begin{proposition}[Properties of $\Pi^{d}$]
\label{prop:projection}
Let $z^{d}\neq x^{d}$. Then \emph{(a)} $\Pi^{d}$ is linear and commutes with $\partial_t$;
\emph{(b)} if $M$ is single-site supported then $\Pi^{d}[M](x,z^{d})=M(x,x^{d\to z^{d}})$,
in particular $\Pi^{d}[Q_t](x,z^{d})=\lambda_t p_{1|t}(z^{d}|x)$ and
$\Pi^{d}[U^{\theta}_{t,r}](x,z^{d})=\mu_{t,r}\tilde p_{t,r}(z^{d}|x)$;
\emph{(c)} $\Pi^{d}[I](x,z^{d})=0$ and
$\Pi^{d}[P^{e}_{t}](x,z^{d})=\delta_{ed}\,p_{1|t}(z^{d}|x)$;
\emph{(d)} $\Pi^{d}[U_{t,r}](x,z^{d})=p^{d}_{r|t}(z^{d}|x)/(r-t)$, so that matching
$\Pi^{d}[U^{\theta}_{t,r}]=\Pi^{d}[U_{t,r}]$ is exactly the condition that the
coordinate-factorized one-step sampler reproduce the correct per-coordinate marginals of
the true jump kernel.
\end{proposition}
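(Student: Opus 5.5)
All four parts follow directly from the matrix form $\Pi^d[M]=ME_d$ with $E_d(y,z^d)=\mathbbm{1}[y^d=z^d]$, together with the fact that $z^d\neq x^d$. We therefore verify each item by evaluating $\sum_{y:\,y^d=z^d}M(x,y)$ for the relevant $M$.

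For (a), linearity is immediate because $M\mapsto ME_d$ is right multiplication by a fixed matrix that does not depend on $t$. For the same reason $\partial_t(M_tE_d)=(\partial_tM_t)E_d$ entrywise, on the finite state space.

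For (b), suppose $M$ is single-site supported, so $M(x,y)=0$ whenever $y$ differs from $x$ in two or more coordinates. Among the states $y$ with $y^d=z^d\neq x^d$, $y$ already differs from $x$ at coordinate $d$. The only such $y$ with Hamming distance $\le1$ is therefore $y=x^{d\to z^d}$; note the diagonal $y=x$ is excluded because $y^d\neq x^d$. Hence the sum collapses to $M(x,x^{d\to z^d})$. The two special cases follow by reading off the off-diagonal entries of \Eqref{eq:Q-operator-form}: $\lambda_t p_{1|t}(z^d|x)$ and $\mu_{t,r}\tilde p_{t,r}(z^d|x)$.

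For (c), $I(x,y)$ is nonzero only at $y=x$, which has $y^d=x^d\neq z^d$, so $\Pi^d[I]=0$. The matrix $P^e_t$ is supported on $\{x^{e\to s}\}$. If $e\neq d$, every $x^{e\to s}$ has $d$-th coordinate $x^d\neq z^d$, so the sum is $0$. If $e=d$, the only contributing term is $s=z^d$, giving $p^d_{1|t}(z^d|x)$. Together these give $\delta_{ed}\,p_{1|t}(z^d|x)$.

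For (d), write $U_{t,r}=(P_{t\to r}-I)/(r-t)$ and use linearity together with (c) for $I$:
\[
\Pi^d[U_{t,r}](x,z^d)=\frac{1}{r-t}\sum_{y:\,y^d=z^d}p_{r|t}(y|x)=\frac{p^d_{r|t}(z^d|x)}{r-t},
\]
since the sum is the $d$-th marginal of the jump kernel.

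For the interpretive clause we use (b): $\Pi^d[U^\theta_{t,r}](x,z^d)=\mu_{t,r}\tilde p_{t,r}(z^d|x)$. So the matching condition $\Pi^d[U^\theta_{t,r}]=\Pi^d[U_{t,r}]$ for all $z^d\neq x^d$ says exactly that $(r-t)\mu_{t,r}\tilde p_{t,r}(z^d|x)=p^d_{r|t}(z^d|x)$ off the diagonal. This is the off-diagonal law of coordinate $d$ under the one-step sampler $(1-\omega_{t,r})\delta_{x^d}+\omega_{t,r}\tilde p^d_{t,r}$. Both sides are probability vectors over $\mathcal{S}$, so the diagonal entries also agree, and the full per-coordinate marginals coincide.

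None of the four parts is a real obstacle. The only point needing care is the repeated use of $z^d\neq x^d$, which is what excludes both the diagonal and the other-coordinate moves.
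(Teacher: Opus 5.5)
Your proposal is correct and follows the paper's proof essentially line by line. It uses the same Hamming-distance support argument for (b) and the same support reasoning for $I$ and $P^e_t$ in (c), which the paper phrases as applying (b); (d) is likewise the same direct computation from the definitions of $U_{t,r}$ and $\Pi^d$. Your explicit check of the interpretive clause in (d) adds a step the paper leaves implicit: both the sampler's $d$-th factor and $p^d_{r|t}(\cdot|x)$ are probability vectors, so agreement off the diagonal forces agreement on it.
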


\begin{proof}
(a) is immediate. (b) The only $y$ within Hamming distance one of $x$ with
$y^{d}=z^{d}\neq x^{d}$ is $y=x^{d\to z^{d}}$. (c) Apply (b) to $I$ (supported on $y=x$,
which has $y^{d}=x^{d}\neq z^{d}$) and to $P^{e}_t$ (supported on $y=x^{e\to s}$, which
requires $e=d$ and $s=z^{d}$). (d) Follows from \Eqref{eq:U-def} and
\Eqref{eq:projection}.
\end{proof}

Applying $\Pi^{d}$ to \Eqref{eq:meanflow_identity} and using
Proposition~\ref{prop:projection}(a)--(b) gives the projected identity we train against:
for every $d$ and every $z^{d}\neq x^{d}$,
\begin{equation}
\label{eq:marginal-identity}
  \Pi^{d}\big[U_{t,r}\big](x,z^{d})
  =\lambda_t\,p_{1|t}(z^{d}|x)
  -(t-r)\Big(\partial_t\,\Pi^{d}\big[U_{t,r}\big](x,z^{d})
  +\Pi^{d}\big[Q_tU_{t,r}\big](x,z^{d})\Big).
\end{equation}
Only the projected transport term couples different coordinates. This is not because $\Pi^{d}$ fails to commute with $Q_t$: since $\Pi^{d}[M]=ME_d$ acts on the target state, it commutes with left multiplication, $\Pi^{d}[Q_tM]=Q_t\,\Pi^{d}[M]$, as made explicit in \Eqref{eq:closure}. Rather, $Q_t$ acts on the source state, so $Q_t\,\Pi^{d}[U_{t,r}]$ evaluates $\Pi^{d}[U_{t,r}]$ at the states $x^{e\to s}$ reached by a jump of any coordinate $e$, not only of $d$. The next proposition evaluates it in closed-form for the
single-site model, which is what makes the objective tractable and proves
\Eqref{eq:commutator}.

\begin{proposition}[Tractable projected transport term]
\label{prop:tractable-QU}
Let $Q_t$ and $U^{\theta}_{t,r}$ be as in \Eqref{eq:Q-operator-form}. Then for every $d$
and every $z^{d}\neq x^{d}$,
\[
  \Pi^{d}\big[Q_tU^{\theta}_{t,r}\big](x,z^{d})
  =\lambda_t\,\mu_{t,r}\Big[\,C^{d}_{\theta}(x,z^{d})-p_{1|t}(z^{d}\mid x)\Big],
\]
with $C^{d}_{\theta}$ as in \Eqref{eq:C-term}.
\end{proposition}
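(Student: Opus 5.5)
The plan is a direct computation in operator form, using the decomposition \Eqref{eq:Q-operator-form} and the properties of $\Pi^{d}$ in Proposition~\ref{prop:projection}.

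\textbf{Step 1: move $Q_t$ outside the projection.} Since $\Pi^{d}[M]=ME_d$ acts only on the target index, $\Pi^{d}[Q_tM]=Q_t\,\Pi^{d}[M]$. We then evaluate $\Pi^{d}[U^\theta_{t,r}]$ at every state, not only at $x$. Here one must be careful: Proposition~\ref{prop:projection}(b) gives $\Pi^{d}[U^\theta_{t,r}](x',z^d)=\mu_{t,r}\tilde p_{t,r}(z^d|x')$ only when $z^d\neq x'^d$.

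\textbf{Step 2: handle the neighbor whose $d$-th coordinate already equals $z^d$.} The left multiplication by $Q_t$ visits neighbors $x'=x^{e\to s}$. For $e=d$ and $s=z^d$, the neighbor has $x'^d=z^d$, so the diagonal entry enters. For this case I will compute $\Pi^{d}[U^\theta_{t,r}](x',x'^d)$ directly from \Eqref{eq:Q-operator-form}. The row of $\tilde P^{d}-I$ summed over targets with $d$-th coordinate $x'^d$ gives $\tilde p(x'^d|x')-1$. Each $e'\neq d$ term contributes $\sum_{s}\tilde p^{e'}(s|x')-1=0$. So $\Pi^{d}[U^\theta](x',x'^d)=\mu_{t,r}(\tilde p_{t,r}(x'^d|x')-1)$.

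In other words, the uniform formula is
\[
\Pi^{d}[U^\theta_{t,r}](x',z^d)=\mu_{t,r}\big(\tilde p_{t,r}(z^d|x')-\delta_{x'^d}(z^d)\big),
\]
valid for all $x'$ and $z^d$. This bookkeeping is the main obstacle; the rest is algebra.

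\textbf{Step 3: write out the left multiplication.} By \Eqref{eq:Q-operator-form},
\[
(Q_tf)(x)=\lambda_t\sum_{e}\Big(\sum_{s}p^{e}_{1|t}(s|x)f(x^{e\to s})-f(x)\Big).
\]
Take $f(x')=\mu_{t,r}(\tilde p_{t,r}(z^d|x')-\delta_{x'^d}(z^d))$, with $z^d\neq x^d$ fixed.

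\textbf{Step 4: split into the $\tilde p$ part and the delta part.}
\begin{itemize}
\item The $\tilde p$ part yields exactly $\lambda_t\mu_{t,r}C^{d}_\theta(x,z^d)$, by the definition \Eqref{eq:C-term}.
\item In the delta part, $\delta_{x^d}(z^d)=0$, so the $-f(x)$ terms contribute nothing from the delta.
\item For $e\neq d$, $x^{e\to s}$ keeps $d$-th coordinate $x^d\neq z^d$, so again the delta contributes nothing.
\item For $e=d$, only $s=z^d$ survives, giving $-\lambda_t\mu_{t,r}p^{d}_{1|t}(z^d|x)$.
\end{itemize}
Summing the two parts gives $\lambda_t\mu_{t,r}[C^{d}_\theta-p_{1|t}(z^d|x)]$, as claimed.

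As a consistency check, one could alternatively expand $Q_tU^\theta$ as the double sum $\lambda_t\mu_{t,r}\sum_{e,d'}(P^e_t-I)(\tilde P^{d'}-I)$ and project term by term using Proposition~\ref{prop:projection}(c). This should reproduce the same cancellation, with the $-p_{1|t}$ term arising from $\Pi^{d}[P^{d}_t]$.
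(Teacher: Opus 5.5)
Your proof is correct, but it is organized differently from the paper's.

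The paper expands the product directly, $Q_tU^{\theta}_{t,r}=\lambda_t\mu_{t,r}\sum_{e,d'}\big(P^{e}\tilde P^{d'}-P^{e}-\tilde P^{d'}+I\big)$, and projects each of the four terms separately. It then does a case analysis on $P^{e}\tilde P^{d'}$ ($d'=d$ versus $d'\neq d$). In that route the single $-p_{1|t}(z^{d}|x)$ appears only after a cancellation: $(\mathcal{D}-1)\,p_{1|t}$ from the $d'\neq d$ products against $\mathcal{D}\,p_{1|t}$ from $\Pi^{d}[P^{e}]$ summed over the free index $d'$.

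You instead use associativity first, $\Pi^{d}[Q_tM]=Q_t\,\Pi^{d}[M]$. This is the paper's \Eqref{eq:closure}, which it states only after the proposition. You then need just one projected matrix, evaluated at the neighbours $x^{e\to s}$.

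In your route the $d'\neq d$ terms vanish already inside $\Pi^{d}[U^{\theta}_{t,r}]$, because $\tilde P^{d'}-I$ has zero row sums. The $-p_{1|t}$ term then has a clear source: it is the $-\mu_{t,r}$ in the diagonal entry $\Pi^{d}[U^{\theta}_{t,r}](w,w^{d})=\mu_{t,r}\big(\tilde p_{t,r}(w^{d}|w)-1\big)$ at the single neighbour $w=x^{d\to z^{d}}$, weighted by $\lambda_t\,p^{d}_{1|t}(z^{d}|x)$.

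Your route is shorter and is exactly the mechanism behind the paper's general identity \Eqref{eq:transport-general}. It therefore extends directly to any pair of single-site matrices. Its price is extending $\Pi^{d}$ to the diagonal $z^{d}=w^{d}$. You handle this correctly, and it agrees with the zero-row-sum extension the paper adopts.

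The paper's route never leaves the off-diagonal definition of $\Pi^{d}$ in \Eqref{eq:projection}. It needs only Proposition~\ref{prop:projection}(c) and the row-stochasticity of $\tilde P^{d'}$, at the cost of the $\mathcal{D}$ versus $(\mathcal{D}-1)$ bookkeeping.
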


\begin{proof}
Expanding the product of the two expressions in \Eqref{eq:Q-operator-form},
\[
  Q_tU^{\theta}_{t,r}
  =\lambda_t\mu_{t,r}\sum_{e=1}^{\mathcal{D}}\sum_{d'=1}^{\mathcal{D}}
  \Big(P^{e}\tilde P^{d'}-P^{e}-\tilde P^{d'}+I\Big),
\]
where we drop the sub/superscripts $t,r,\theta$ for readability. Fix $d$ and
$z^{d}\neq x^{d}$ and apply $\Pi^{d}$ term by term.

\textbf{Step 1.} By Proposition~\ref{prop:projection}(c), $\Pi^{d}[I]=0$,
$\sum_{e,d'}\Pi^{d}[P^{e}](x,z^d)=\mathcal{D}\,p_{1|t}(z^{d}|x)$ and
$\sum_{e,d'}\Pi^{d}[\tilde P^{d'}](x,z^d)=\mathcal{D}\,\tilde p_{t,r}(z^{d}|x)$, each sum
contributing a factor $\mathcal{D}$ from the free index.

\textbf{Step 2.} For the products, write
$(P^{e}\tilde P^{d'})(x,y)=\sum_{s}P^{e}(x,x^{e\to s})\,\tilde P^{d'}(x^{e\to s},y)$, so
that $y$ must be of the form $y=x^{e\to s,\,d'\to s'}$. Requiring $y^{d}=z^{d}\neq x^{d}$
forces $d\in\{e,d'\}$ with the corresponding new symbol equal to $z^{d}$. Two cases:
\emph{(i)} $d'=d$: then $s'=z^{d}$ and the remaining index $s$ is free and summed over by
$\Pi^{d}$, giving, for every $e$ (including $e=d$),
$\sum_{s}p^{e}_{1|t}(s|x)\tilde p_{t,r}(z^{d}|x^{e\to s})
=\mathbb{E}_{s\sim p^{e}_{1|t}(\cdot|x)}[\tilde p_{t,r}(z^{d}|x^{e\to s})]$.
\emph{(ii)} $d'\neq d$: then coordinate $d$ can only be set by the first factor, so $e=d$
and $s=z^{d}$; the index $s'$ is free and summed by $\Pi^{d}$, and since $\tilde P^{d'}$
is row-stochastic, $\sum_{s'}\tilde p_{t,r}(s'|x^{d\to z^{d}})=1$. Each of the
$\mathcal{D}-1$ values of $d'$ contributes $p_{1|t}(z^{d}|x)$. Summing,
\[
\sum_{e,d'}\Pi^{d}\big[P^{e}\tilde P^{d'}\big](x,z^{d})
=\sum_{e}\mathbb{E}_{s\sim p^{e}_{1|t}(\cdot|x)}\big[\tilde p_{t,r}(z^{d}|x^{e\to s})\big]
+(\mathcal{D}-1)\,p_{1|t}(z^{d}|x).
\]

\textbf{Step 3.} Combining Steps 1 and 2,
\[
  \Pi^{d}\big[Q_tU^{\theta}\big](x,z^{d})
  =\lambda_t\mu_{t,r}\Big[
  \sum_{e}\mathbb{E}_{s\sim p^{e}_{1|t}(\cdot|x)}\big[\tilde p_{t,r}(z^{d}|x^{e\to s})\big]
  -\mathcal{D}\,\tilde p_{t,r}(z^{d}|x)
  -p_{1|t}(z^{d}|x)\Big],
\]
which is the claim after grouping the $\mathcal{D}$ copies of $\tilde p_{t,r}(z^{d}|x)$
into the $\mathcal{D}$ summands of \Eqref{eq:C-term}.
\end{proof}

\paragraph{What the objective learns.}

The projected identity \Eqref{eq:marginal-identity} is closed in the per-coordinate
marginals. For a matrix $M$ with zero row sums, extend $\Pi^{d}[M](x,\cdot)$ to $z^d=x^d$ by
$\Pi^{d}[M](x,x^d)\triangleq\sum_{y:\,y^d=x^d}M(x,y)=-\sum_{s\neq x^d}\Pi^{d}[M](x,s)$.
Exchanging the sums in \Eqref{eq:projection} gives
\begin{equation}
\label{eq:closure}
  \Pi^{d}\big[Q_tM\big](x,z^{d})
  =\sum_{y:\,y^{d}=z^{d}}\sum_{x'}Q_t(x,x')\,M(x',y)
  =\sum_{x'}Q_t(x,x')\,\Pi^{d}[M](x',z^{d}),
\end{equation}
so the projected transport term depends on $M$ only through its off-diagonal per-coordinate marginals
at the states $x'$ reachable from $x$ in one jump; \Eqref{eq:transport-general} is
\Eqref{eq:closure} written out for a single-site $M$. In particular
$\Pi^{d}[Q_tU^\theta_{t,r}]=\Pi^{d}[Q_tU_{t,r}]$ whenever
$\Pi^{d}[U^\theta_{t,r}]=\Pi^{d}[U_{t,r}]$ for all $d$ and all states, so replacing $U_{t,r}$
by the single-site model in the projected transport term is not an approximation.

\begin{proposition}[Fixed point of the projected identity]
\label{prop:fixed-point}
Fix $r$ and assume that $t\mapsto Q_t$ is continuous on $[t_0,r]$. Let $V_t(x,z^d)$,
$z^d\neq x^d$, be per-coordinate rates that are $C^1$ in $t$ and bounded on $[t_0,r)$,
extended to $z^d=x^d$ by zero row sums, and suppose that for all $t\in[t_0,r)$, $x$, $d$
and $z^d\neq x^d$,
\begin{equation}
\label{eq:projected-fp}
  V_t(x,z^{d})
  =\Pi^{d}[Q_t](x,z^{d})
  -(t-r)\Big(\partial_tV_t(x,z^{d})+\sum_{x'}Q_t(x,x')\,V_t(x',z^{d})\Big).
\end{equation}
Then $V_t=\Pi^{d}[U_{t,r}]$, i.e.\ $V_t(x,z^d)=p^{d}_{r|t}(z^d|x)/(r-t)$.
\end{proposition}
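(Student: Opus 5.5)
Following the proof of Proposition~\ref{prop:agi}, the plan is to rewrite \Eqref{eq:projected-fp} as a linear backward ODE in $t$ for the rescaled quantity $W_t\triangleq(t-r)V_t$, whose solution is unique. Then we check that the exact marginal $\Pi^d[U_{t,r}]$ solves the same problem.

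First, extend $V_t(x,\cdot)$ to $z^d=x^d$ by zero row sums, as in the statement. We treat $V_t$ as a matrix indexed by $(x,z^d)$ for each fixed $d$. The diagonal entries of \Eqref{eq:projected-fp} then also hold. Indeed, the right-hand side has zero row sums: $\Pi^d[Q_t]$ extended does, $\partial_tV_t$ does, and $\sum_{x'}Q_t(x,x')V_t(x',\cdot)$ sums to $\sum_{x'}Q_t(x,x')\cdot 0=0$. So the identity holds for every $z^d\in\mathcal{S}$. Set $W_t=(t-r)V_t$ and note $\partial_tW_t=V_t+(t-r)\partial_tV_t$. The identity then becomes
\[
\partial_tW_t=\Pi^d[Q_t]-Q_tW_t ,\qquad t\in[t_0,r),
\]
where $\Pi^d[Q_t]$ is extended to the diagonal by zero row sums, i.e.\ $\Pi^d[Q_t]=Q_tE_d$. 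Since $V_t$ is bounded, $W_t\to0$ as $t\uparrow r$. This gives a terminal condition $W_r=0$ for a linear ODE whose coefficient $t\mapsto Q_t$ is continuous on $[t_0,r]$. Picard--Lindel\"of, or Gr\"onwall applied to the difference of two solutions, which satisfies $\partial_t\Delta=-Q_t\Delta$ with $\Delta_r=0$, gives uniqueness on $[t_0,r]$.

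Second, we verify that the candidate $\Omega^d_t\triangleq(t-r)\Pi^d[U_{t,r}]=\Pi^d[I-P_{t\to r}]=(I-P_{t\to r})E_d$ solves the same problem. By the backward equation \Eqref{eq:kolmogorov-fwd-bwd}, $\partial_t\Omega^d_t=Q_tP_{t\to r}E_d=Q_tE_d-Q_t\Omega^d_t$. This is exactly the projected version of the computation in the proof of Proposition~\ref{prop:agi}, using \Eqref{eq:closure}. Moreover $\Omega^d_r=0$ because $P_{r\to r}=I$. The candidate also has zero row sums and the correct off-diagonal entries, $p^d_{r|t}(z^d|x)$ up to sign, by Proposition~\ref{prop:projection}(d). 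For entries with $z^d=x^d$, the extension convention for $\Pi^d[I-P]$ agrees with the column-sum definition because $I-P$ has zero row sums.

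Uniqueness then gives $(t-r)V_t=(t-r)\Pi^d[U_{t,r}]$, and dividing by $t-r\neq0$ yields the claim. I expect the main obstacle to be the endpoint: \Eqref{eq:projected-fp} is only assumed on $[t_0,r)$, where the factor $t-r$ degenerates. The boundedness hypothesis is exactly what forces $W_t\to0$, and I would handle this by running the uniqueness argument on $[t_0,r-\epsilon]$ and letting $\epsilon\to0$, using continuity of $W$ at $r$. A secondary point is to check carefully that the diagonal extension is consistent on both sides, so that the ODE is posed for full matrices and \Eqref{eq:closure} applies.
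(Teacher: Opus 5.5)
Your proposal is correct and takes essentially the same approach as the paper. Both proofs rewrite the identity as the linear terminal-value problem $\partial_t W_t=\Pi^d[Q_t]-Q_tW_t$ for $W_t=(t-r)V_t$, use boundedness of $V_t$ to get $W_t\to 0$ as $t\to r$, invoke uniqueness of linear ODEs with continuous coefficients, and identify the solution with $(t-r)\Pi^d[U_{t,r}]$. The differences are cosmetic: you verify the candidate directly from the backward Kolmogorov equation and pose the ODE on full matrices, whereas the paper cites the projected identity with the closure relation and works with off-diagonal entries whose diagonals are fixed by zero row sums.
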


\begin{proof}
By \Eqref{eq:closure}, \Eqref{eq:marginal-identity} states that $\Pi^{d}[U_{t,r}]$, which is
bounded by Proposition~\ref{prop:U-properties}(c), satisfies \Eqref{eq:projected-fp}. For
uniqueness, let $W_t\triangleq(t-r)V_t$. Since $\partial_tW_t=V_t+(t-r)\partial_tV_t$,
\Eqref{eq:projected-fp} is equivalent to the linear system
$\partial_tW_t=\Pi^{d}[Q_t]-\mathcal{Q}_tW_t$ with
$(\mathcal{Q}_tW)(x,z^d)\triangleq\sum_{x'}Q_t(x,x')W(x',z^d)$, in the finitely many
off-diagonal entries of $W_t$ (the diagonal ones being fixed by zero row sums). Boundedness
of $V_t$ gives $W_t\to0$ as $t\to r$. A linear ODE with continuous coefficients has a
unique solution with a prescribed terminal value, so $W_t$, and hence $V_t$, is unique.
\end{proof}

At the population optimum of \Eqref{eq:v-loss}, a model that can match the marginal rate
satisfies $Q^\theta_{t,r}(x_t,z^d)=\Pi^{d}[Q_t](x_t,z^d)$, the minimizer of the Bregman
divergence to the conditional rate. By \Eqref{eq:Q-reparam} and \Eqref{eq:closure} this is
\Eqref{eq:projected-fp} for $V_t=\Pi^{d}[U^\theta_{t,r}]$, so
$\Pi^{d}[U^\theta_{t,r}]=\Pi^{d}[U_{t,r}]$: the objective learns the per-coordinate
marginals of the exact finite-step kernel. Regressing $\Pi^{d}[U^\theta_{t,r}]$ directly
onto $\Pi^{d}[U_{t,r}]$ has the same solution, but requires samples of the kernel over
$[t,r]$, i.e.\ multi-step trajectories of a teacher; the projected identity only requires
the instantaneous rate and single-coordinate perturbations of the current state.

\subsection{The masked path}
\label{ap:masked}

For the masked source $p_0^d=\delta_{\mathbf{m}}$, with $\mathbf{m}$ the mask token and
$\mathcal{M}(x)$ the set of masked coordinates of $x$, the posterior and the network satisfy
the carry-over property $p^{e}_{1|t}(\cdot|x)=\tilde p^{e}_{t,r}(\cdot|x)=\delta_{x^{e}}$ for
$e\notin\mathcal{M}(x)$, and neither predicts $\mathbf{m}$. If $e\notin\mathcal{M}(x)$ then
$x^{e\to s}=x$ almost surely and the $e$-th summand of \Eqref{eq:C-term} vanishes; if
$e=d\in\mathcal{M}(x)$, carry-over gives $\tilde p^{d}_{t,r}(\cdot|x^{d\to s})=\delta_{s}$ and
$\mathbb{E}_{s}[\delta_{s}(z^{d})]=p_{1|t}(z^{d}|x)$. Hence
\begin{equation}\label{eq:C-masked}
C^{d}_{\theta}(x_t,z^{d})=\big(p_{1|t}(z^{d}|x_t)-\tilde p_{t,r}(z^{d}|x_t)\big)
+C^{d}_{\mathrm{cross}}(x_t,z^{d}),
\end{equation}
with $C^{d}_{\mathrm{cross}}(x_t,z^{d})\triangleq\sum_{e\in\mathcal{M}(x_t)
\setminus\{d\}}\big(\mathbb{E}_{s\sim p^{e}_{1|t}(\cdot|x_t)}[\tilde p_{t,r}(z^{d}|
x_t^{e\to s})]-\tilde p_{t,r}(z^{d}|x_t)\big)$, so that $\Gamma^{d}_{\theta}$ of \Eqref{eq:Gamma}, which contains the projected transport term, reduces to $C^{d}_{\mathrm{cross}}$ and
\Eqref{eq:At-tractable} becomes
\begin{equation}\label{eq:At-masked}
Q_{t,r}^\theta(x_t,z^{d})
= \lambda_t\,\tilde p_{t,r}(z^{d}\mid x_t)
+ \operatorname{sg}\Big\{(t-r)\,\mu_{t,r}\Big[
      \partial_t\tilde p_{t,r}(z^{d}\mid x_t)
      +\lambda_t\,C^{d}_{\mathrm{cross}}(x_t,z^{d})\Big]\Big\}.
\end{equation}
Since unmasked coordinates contribute nothing, $C^{d}_{\mathrm{cross}}$ is estimated with a
lower variance than \Eqref{eq:C-estimator} by drawing $e\sim\mathrm{Unif}(\mathcal{M}(x_t))$
and $s\sim p^{e}_{1|t}(\cdot|x_t)$ and setting
\begin{equation}\label{eq:C-estimator-masked}
  \widehat C^{d}_{\mathrm{cross}}(x_t,z^{d})
  =\big|\mathcal{M}(x_t)\big|\cdot\mathbbm{1}[e\neq d]\cdot
  \Big(\tilde p_{t,r}\big(z^{d}\mid x_t^{e\to s}\big)-\tilde p_{t,r}\big(z^{d}\mid x_t\big)\Big).
\end{equation}
For an unmasked $d$ the shifted probability is $\hat q^{d}_\theta=\delta_{x_t^d}$, and for a
masked $d$ it satisfies $\hat q^{d}_\theta(\mathbf{m}|x_t)=0$, so \Eqref{eq:loss-general}
reduces to a weighted cross-entropy over the masked coordinates,
\begin{equation}\label{eq:loss-masked}
  \mathcal{L}_{\tilde{p}}(\theta)
  =\mathbb{E}_{x_1,(t,r),x_t}\Bigg[\lambda_t\!\!\sum_{d\in\mathcal{M}(x_t)}\!\!
  -\log\bigg(
  \tilde p_{t,r}(x_1^{d}\mid x_t)
  +\operatorname{sg}\Big\{\tfrac{(t-r)\mu_{t,r}}{\lambda_t}
  \Big[\partial_t\tilde p_{t,r}(x_1^{d}\mid x_t)
  +\lambda_t\,C^{d}_{\mathrm{cross}}(x_t,x_1^{d})\Big]\Big\}
  \bigg)\Bigg]+\mathrm{const}.
\end{equation}
This is the objective we use on ImageNet.

\subsection{Sampling with the Discrete Average Generator}\label{ap:sampling}

\begin{algorithm}[t]
\caption{Sampling with the Discrete Average Generator}
\label{alg:dag-sampling}
\begin{algorithmic}[1]
\Require Network $H^\theta$, schedule $\kappa_t$, step count $K$, grid
  $0 = \tau_0 < \tau_1 < \cdots < \tau_K = 1$
\State $x_{\tau_0} \sim p_0$ \Comment{uniform noise; all-mask state for the masked source}
\For{$k = 0, \dots, K-1$}
  \State $t \gets \tau_k$, \; $r \gets \tau_{k+1}$
  \Comment{a single jump of size $r - t$}
  \State $\tilde p_{t,r}(\cdot \mid x_t) \gets H^\theta(x_t, t, r)$
  \Comment{one NFE, $\mathcal{D}\times|\mathcal{S}|$}
  \State $\displaystyle \omega_{t,r} \gets (r-t)\,\mu_{t,r}
    = \frac{\kappa_r - \kappa_t}{1 - \kappa_t}$
  \Comment{$\omega_{t,r}\in[0,1]$ by monotonicity of $\kappa$}
  \For{each coordinate $d = 1, \dots, \mathcal{D}$ \textbf{independently}}
    \State $x_r^d \gets z^d \sim \tilde p^{d}_{t,r}(\cdot \mid x_t)$
      with probability $\omega_{t,r}$, else $x_r^d \gets x_t^d$
    \Comment{masked source: unmasked $x_t^d$ carried over}
  \EndFor
\EndFor
\State \Return $x_{\tau_K}$
\end{algorithmic}
\end{algorithm}

\paragraph{Sampling.}
Algorithm~\ref{alg:dag-sampling} takes, independently for every coordinate, one Euler
step of the learned per-coordinate average generator; its one-step kernel is the product
\begin{equation}\label{eq:sampler-step}
  K^\theta_{t,r}(x_t,y)
  \;=\;\prod_{d=1}^{\mathcal{D}}\Big[
  \delta_{x_t^d}(y^d) + (r-t)\,\Pi^{d}\big[U^\theta_{t,r}\big](x_t,y^d)\Big],
\end{equation}
whose $d$-th factor is a distribution over $\mathcal{S}$, its entry at $y^d=x_t^d$ being
fixed by the zero row sum of $\Pi^{d}[U^\theta_{t,r}]$. This is not the full Euler step
$\delta_{x_t}^\top(I+(r-t)U^\theta_{t,r})$: since $U^\theta_{t,r}$ is single-site
supported, the latter moves at most one coordinate and, for large $\omega_{t,r}$, has a
negative diagonal entry.
In \Eqref{eq:sampler-step} the per-coordinate jump probability is
$(r-t)\,\Pi^{d}[U^\theta_{t,r}](x_t,z^d) = (r-t)\,\mu_{t,r}\,\tilde p_{t,r}(z^d|x_t)
= \omega_{t,r}\,\tilde p_{t,r}(z^d|x_t)$. Three properties follow directly from
Section~\ref{sec:dag} and are worth making explicit.

\emph{(i) Exact marginals.} By
Proposition~\ref{prop:projection}(d), if $\Pi^{d}[U^\theta_{t,r}]=\Pi^{d}[U_{t,r}]$ for every
$d$, then the $d$-th factor of \Eqref{eq:sampler-step} is exactly the $d$-th marginal of
$p_{r|t}(\cdot|x_t)$, for any step size: no discretisation error is incurred by the
per-coordinate jump. The remaining error of a single step is the dependence between the
coordinates under $p_{r|t}(\cdot|x_t)$, which a product kernel cannot represent.
Proposition~\ref{prop:fixed-point} shows that these exact marginals are the population target
of our objective.

\emph{(ii) Train/test consistency.} The factor $\mu_{t,r}$ appearing in
the sampler is the same one used to parameterise $U^\theta_{t,r}$ during training, so the
jump probability $\omega_{t,r}$ is by construction the quantity the training objective was
written for. Had we instead kept the instantaneous scale, $U^\theta = \lambda_t\tilde
p_{1|t}$, the sampler would use $(r-t)\lambda_t$ while the conditional path jumps with
probability $\frac{\kappa_r-\kappa_t}{1-\kappa_t}$; these agree only for the linear schedule
$\kappa_t = t$, and diverge as $O((r-t)^2\ddot\kappa_t)$ otherwise. Carrying $\mu_{t,r}$
explicitly removes this mismatch for any schedule.

\emph{(iii) Where the learning shows up.} The product in
\Eqref{eq:sampler-step} is over coordinates, so a single step cannot reproduce the
correlations of the true kernel $P_{t\to r}$. What the objective can improve is
therefore the per-coordinate marginals. We do not claim a guarantee on how much
exact marginals reduce the error of a $K$-step sampler; we measure this exactly in
Section~\ref{sec:sim}. Setting $r = t$ makes $\omega_{t,r}\to 0$ and recovers the
instantaneous rate, so Algorithm~\ref{alg:dag-sampling} degenerates gracefully to the
standard ancestral sampler as $K$ grows.

Finally, the network must be queried with the same $(t,r)$ conditioning it
saw in training: a $K$-step run with a uniform grid only ever presents step sizes
$r-t = 1/K$, so the training distribution over $(t,r)$ has to place mass on the step sizes
that will actually be used at test time.

\subsection{The uniform-state (DUO) process}
\label{ap:duo}

Everything above is stated for the mixture path of
\Eqref{eq:coordinate-wise conditional path}, whose conditional rate
$Q_t^{d}(x^{d},z^{d}|x_1^{d})=\lambda_t(\delta_{x_1^{d}}(z^{d})-\delta_{x^{d}}(z^{d}))$
does not depend on the source $p_0$. For a uniform source this is the CTMC in which a
single noise sample $x_0$ is drawn once and each coordinate either still shows $x_0$ or has
already switched to $x_1$. Uniform-state models in the DUO family \citep{sahoo2025the} use a
\emph{different} CTMC with the same marginals $p_{t|1}$: the noise is re-randomised at every
step, as in D3PM. This appendix records the parameterization we use on top of the
uniform-state checkpoints of our text experiments.

Throughout this subsection we follow the convention of DUO, in which
$\alpha_t$ decreases with $t$, $t=0$ is data, a step goes from $t$ to $r<t$, and
$c'\triangleq(1-\alpha_t)/|\mathcal{S}|$. Write $\alpha_{t|s}=\alpha_t/\alpha_s$.

\paragraph{Conditional and marginal rates.}
The conditional rate is obtained by differentiating the bridge
$q(x_s|x_t,x_1)\propto\big[\alpha_{t|s}\delta_{x_s}(x_t)+
\tfrac{1-\alpha_{t|s}}{|\mathcal{S}|}\big]\big[\alpha_s\delta_{x_1}(x_s)+
\tfrac{1-\alpha_s}{|\mathcal{S}|}\big]$ at $s=t$:
\begin{equation}
\label{eq:duo-cond-rate}
  Q_t(x_t,z\,|\,x_1)
  =\frac{-\dot\alpha_t}{\alpha_t|\mathcal{S}|}\cdot
   \frac{\alpha_t\mathbbm{1}[z=x_1]+c'}{\alpha_t\mathbbm{1}[x_1=x_t]+c'},
  \qquad z\neq x_t .
\end{equation}
Averaging \Eqref{eq:duo-cond-rate} over $x_1\sim p_{1|t}(\cdot|x_t)$ --- which by
Proposition~3.1 of \citet{campbell2024generative} is the marginal generator --- splits on $x_1\in\{z,x_t,\text{other}\}$
and yields the closed-form
$\frac{-\dot\alpha_t}{\alpha_t|\mathcal{S}|}\big[p_z\frac{\alpha_t+c'}{c'}
+p_{x_t}\frac{c'}{\alpha_t+c'}+(1-p_z-p_{x_t})\big]$. Unlike on the mixture path, this rate
is not proportional to the posterior mass $p_z$ of $z$, so the parameterization
$U^\theta_{t,r}=\mu_{t,r}\tilde p_{t,r}$ does not carry over.

\paragraph{Parameterisation.}
We take $U^{\theta}_{t,r}$ from the exact marginalization of the bridge,
\begin{equation}
\label{eq:duo-U}
  \Pi^{d}\big[U^{\theta}_{t,r}\big](x_t,z^{d})
  =\frac{1}{t-r}\,\mathbb{E}_{v\sim\tilde p^{d}_{t,r}(\cdot|x_t)}
  \big[q(x_r^d=z^{d}\,|\,x_t^d,\,x_1^d=v)\big],\qquad z^{d}\neq x_t^{d},
\end{equation}
which, because the bridge is a ratio of two functions affine in the one-hot encoding of
$v$ whose denominator takes only two values, has a closed-form costing $O(|\mathcal{S}|)$
per coordinate. The sampler draws each coordinate from the same kernel, so a single
per-coordinate Euler step of \Eqref{eq:duo-U} is exactly the sampler step.

\paragraph{The projected transport term in general form.}
Proposition~\ref{prop:tractable-QU} is a special case of an identity that holds
for \emph{any} pair of single-site matrices.  Let $q^{e}(x,\cdot)$ and
$u^{d}(x,\cdot)$ denote the coordinate-$e$ row of $Q_t$ and the coordinate-$d$ row
of $U^{\theta}_{t,r}$, both with zero row sum.  Then
\begin{align}
\label{eq:transport-general}
  \Pi^{d}\big[Q_tU^{\theta}_{t,r}\big](x,z^{d})
  &=\sum_{e=1}^{\mathcal{D}}\sum_{s} q^{e}(x,s)\,u^{d}\big(x^{e\to s},z^{d}\big)\notag\\
  &=\sum_{e=1}^{\mathcal{D}}\Lambda^{e}(x)
   \Big(\mathbb{E}_{s\sim\nu^{e}}\big[u^{d}(x^{e\to s},z^{d})\big]-u^{d}(x,z^{d})\Big),
\end{align}
with $\Lambda^{e}(x)=\sum_{s\neq x^{e}}q^{e}(x,s)$ the total outflow rate at
coordinate $e$ and $\nu^{e}=q^{e}/\Lambda^{e}$ the normalised jump law.  The
terms with $d'\neq d$ in the expansion vanish because $u^{d'}$ has zero row sum.
Substituting the mixture-path rows recovers $\lambda_t\mu_{t,r}(C^{d}_{\theta}-p_{1|t})$
of \Eqref{eq:commutator}; with the uniform-state rows, one draw
$e\sim\mathrm{Unif}(\{1,\dots,\mathcal{D}\})$, $s\sim\nu^{e}$ gives the unbiased estimate
$\mathcal{D}\,\Lambda^{e}(x)\big(u^{d}(x^{e\to s},z^{d})-u^{d}(x,z^{d})\big)$ at the cost of
a single forward pass. The objective is the Bregman divergence \Eqref{eq:v-loss}
between \Eqref{eq:duo-cond-rate} and the reparameterized rate, summed over all
$z^d\neq x_t^d$.

\paragraph{Sampling.}
The sampler draws each coordinate from the kernel \Eqref{eq:duo-U}, i.e.\ it takes one
per-coordinate Euler step of the learned average generator. Two conditioning details
matter for a model trained this way. First, the final noise-removal step
sets $\alpha_r=1$, i.e.\ $r=0$, so its effective step size is the whole of $t$
rather than the grid spacing; passing no step size at all there queries a conditioning
path the network never sees in training.  Second, a $K$-step run on a uniform grid
only ever presents $t-r=1/K$, so the training distribution over $(t,r)$ must place
mass there; the logit-normal design over $(t,r)$ used in our experiments places mass on the
step sizes $1/K$ of all budgets $K\le64$ that we evaluate.

Finally, nucleus filtering and softmax
temperature must be left at their neutral settings: both reshape the kernel after
the fact and would break the calibration that \Eqref{eq:duo-U} relies on.

\subsection{Proof of Training Objective for Discrete Average Generator}\label{ap:dag-objective-proof}
\begin{proof}
We show how the objective
\[
\mathcal{L}(\theta) = \mathbb{E}\Bigg[\sum_{d=1}^{\mathcal{D}}
\sum_{z^d \neq x_t^d}
D_F\!\bigg(Q_t(x_t,z^d|x_1) \;\bigg\|\;
\Pi^{d}\big[U_{t,r}^\theta\big](x_t,z^d)
+ \operatorname{sg}\!\Big(
(t-r)\Big(
\partial_t \Pi^{d}\big[U_{t,r}^\theta\big](x_t,z^d)
+ \Pi^{d}\big[Q_t\,U_{t,r}^\theta\big](x_t,z^d)
\Big)\Big)\bigg)\Bigg],
\]
with $\Pi^{d}[U_{t,r}^\theta](x_t, z^d) = \mu_{t,r}\,\tilde p_{t,r}(z^d|x_t)$,
reduces to \Eqref{eq:training-posterior_dmf} and then to \Eqref{eq:loss-general}.

\textbf{Step 1: Bregman divergence.}
For $F(x) = x\log x$, the Bregman divergence is
\[
D_F(a\|b) = a\log\frac{a}{b} - a + b.
\]

\textbf{Step 2: Conditional and target rates.}
For a single coordinate $d$, restricting to off-diagonal entries $z^d \neq x_t^d$,
the conditional rate is
\[
Q_{\mathrm{cond}}(z^d)
= Q_t^d(x_t^d, z^d | x_1^d)\,\mathbf{1}_{z^d \neq x_t^d}
= \lambda_t\,\mathbf{1}_{\{x_1^d \neq x_t^d\}}\,\delta_{x_1^d}(z^d).
\]
The target rate is
\[
Q_\theta(z^d)
= \Pi^{d}\big[U_{t,r}^\theta\big](x_t, z^d) + A_t(x_t, z^d)
= \mu_{t,r}\,\tilde p_{t,r}(z^d|x_t) + A_t(x_t, z^d),
\]
where
\[
A_t(x_t, z^d) \triangleq \operatorname{sg}\!\Big(
(t-r)\Big(
\partial_t \Pi^{d}\big[U_{t,r}^\theta\big](x_t,z^d)
+ \Pi^{d}\big[Q_t\,U_{t,r}^\theta\big](x_t,z^d)
\Big)\Big).
\]

\textbf{Step 3: Support of the conditional rate.}
The conditional rate has support only at $z^d = x_1^d$ when $x_1^d \neq x_t^d$,
and is identically zero otherwise. The analysis splits into two cases.

\textbf{Step 4: Log term ($x_1^d \neq x_t^d$).}
At $z^d = x_1^d$:
\[
Q_{\mathrm{cond}}(x_1^d)\log\frac{Q_{\mathrm{cond}}(x_1^d)}{Q_\theta(x_1^d)}
=\lambda_t\log\lambda_t-\lambda_t\log Q_\theta(x_1^d),
\]
whose first term does not depend on $\theta$.

\textbf{Step 5: Linear terms ($x_1^d \neq x_t^d$).}
The linear terms from $D_F$ at $z^d = x_1^d$ are
\[
-Q_{\mathrm{cond}}(x_1^d) + Q_\theta(x_1^d)
= -\lambda_t + Q_\theta(x_1^d).
\]
For all $z^d \neq x_1^d$ (and $z^d \neq x_t^d$),
$Q_{\mathrm{cond}}(z^d) = 0$, so $D_F(0\|Q_\theta(z^d)) = Q_\theta(z^d)$.
Summing over all $z^d \neq x_t^d$ gives
\begin{align*}
\sum_{z^d \neq x_t^d} Q_\theta(z^d)
&= \mu_{t,r}\big(1-\tilde p_{t,r}(x_t^d|x_t)\big)
+ \sum_{z^d \neq x_t^d} A_t(x_t, z^d).
\end{align*}
The last term cannot in general be rewritten as $-A_t(x_t,x_t^d)$: that
step would require $\sum_{z^d}A_t(x_t,z^d)=0$, whereas
$\sum_{z^d}\partial_t\Pi^{d}[U^\theta_{t,r}](x_t,z^d)=\partial_t\mu_{t,r}\neq0$. We
therefore keep the linear contribution as $\sum_{z^d\neq x_t^d}Q_\theta(z^d)$. For
the rate \Eqref{eq:At-tractable} with a normalized $\tilde p_{t,r}$ it takes a simple form:
since $\sum_{z^d}\tilde p_{t,r}(z^d|x_t)=1$ for all $t$ implies
$\sum_{z^d}\partial_t\tilde p_{t,r}(z^d|x_t)=0$, and each summand of $C^{d}_{\theta}$, as well
as $\tilde p_{t,r}-p_{1|t}$, is a difference of two normalized distributions, the
correction sums to zero over $z^d\in\mathcal{S}$ and
\[
\sum_{z^d \neq x_t^d} Q_\theta(z^d)
= \lambda_t\sum_{z^d \neq x_t^d}\hat q^{d}_{\theta}(z^d|x_t)
= \lambda_t\big(1-\hat q^{d}_{\theta}(x_t^d|x_t)\big),
\]
with $\hat q^{d}_{\theta}$ the shifted probability of Section~\ref{sec:tractable}. On
the masked path, $\hat q^{d}_\theta(\mathbf{m}|x_t)=0$ for masked $d$, so this is the
constant $\lambda_t$.

Combining the linear contributions in this case gives
\[
-Q_{\mathrm{cond}}(x_1^d) + \sum_{z^d \neq x_t^d} Q_\theta(z^d)
= -\lambda_t + \sum_{z^d \neq x_t^d} Q_\theta(z^d),
\]
where the leading $-\lambda_t$ is independent of $\theta$ and is absorbed
into the additive constant of the loss.

\textbf{Step 6: Case $x_1^d = x_t^d$.}
In this case $Q_{\mathrm{cond}}(z^d) = 0$ for all $z^d \neq x_t^d$, so
$D_F(0\|Q_\theta(z^d)) = Q_\theta(z^d)$ and the total contribution is
$\sum_{z^d \neq x_t^d} Q_\theta(z^d)$, i.e.\ the same linear expression as
in Step 5 with the logarithmic term absent and without the $-\lambda_t$ offset.

\textbf{Step 7: Combine terms.}
Combining both cases using the indicator $\delta_{x_1^d}(x_t^d)$, dropping the
$\theta$-independent terms (the $a\log a$ part of $D_F$ and the offset
$-\lambda_t(1-\delta_{x_1^d}(x_t^d))$), and summing over all coordinates
$d=1,\dots,\mathcal{D}$, we obtain
\begin{equation}\label{eq:training-posterior_dmf}
\mathcal{L}_{Q}(\theta)=\mathbb{E}_{x_1,(t,r),x_t}\Bigg[\sum_{d=1}^{\mathcal{D}}\bigg\{
-\big(1-\delta_{x_1^d}(x_t^d)\big)\,\lambda_t\,
\log Q_{t,r}^\theta(x_t, x_1^d)
+ \sum_{z^d \neq x_t^d} Q_{t,r}^\theta(x_t, z^d)
\bigg\}\Bigg] + \mathrm{const},
\end{equation}
where $Q_{t,r}^\theta(x_t,z^d)=Q_\theta(z^d)$ is the reparameterized
instantaneous rate of \Eqref{eq:Q-reparam}. Substituting \Eqref{eq:At-tractable}, writing
$\log Q^\theta_{t,r}(x_t,x_1^d)=\log\lambda_t+\log\hat q^{d}_\theta(x_1^d|x_t)$ and using the
linear term above yields \Eqref{eq:loss-general}, and on the masked path
\Eqref{eq:loss-masked}. This completes the proof.
\end{proof}

\section{Experimental Details}

\subsection{Experimental Setup of Simulations}\label{ap:sim-exact}

\paragraph{Data distribution}
We consider a data distribution in which every pair of coordinates is coupled, i.e.\ we
take the Potts law
$$
q(x)\;\propto\;\exp\Big(\beta\!\!\sum_{1\le i<j\le\mc{D}}\!\!\mathbbm{1}\big[x^{i}=x^{j}\big]
\;+\;\varepsilon_{x}\Big),
\qquad \varepsilon_{x}\sim0.3\,\mc{N}(0,1),
$$
where $\beta\ge0$ controls how strongly the coordinates are coupled, and the perturbations
$\varepsilon_{x}$ are drawn once and then held fixed so as to break the permutation
symmetry of the energy. We use $\beta=1.5$ and dimension $\mc{D}=4$.
The vocabulary size is
$4$.

\paragraph{Experimental setup and implementation details.}
The state space has $\abs{\mc{S}}^{\mc{D}}=256$ elements, so the target $q$, the posterior
$p_{1|t}$, the projected transport term \Eqref{eq:commutator} and the law of the $K$-step sampler are
all obtained by enumeration, i.e., no quantity we report is estimated from samples. We
parameterize the network as
MLPs with SiLU activation functions, consisting of four hidden layers with hidden
dimension 256, conditioned on Fourier features of $t$, $r$, $r-t$ and $-\log(1-t)$. We
first train the base denoiser with \Eqref{eq:training-posterior} using the Adam optimizer
with a learning rate of 2e-3 for 1.2e4 steps, and then train only the correction head with
\Eqref{eq:loss-general} for 3e3 steps with the base frozen, so that every
objective we compare shares an identical base denoiser.
The pairs $(t,r)$ are drawn from the evaluation
grids, with $25\%$ of the draws at $r=t$; the instantaneous posterior $p^{e}_{1|t}$ is
supplied by the frozen base denoiser. In the sampling stage, we evaluate the trained network at every state and
propagate the law of the sampler exactly, for a number of steps
$K\in\set{2,3,4,6,8,12,16,24,32}$, and record the total variation between $q$ and that
law. We run each setting with 3 seeds and report the mean.

\subsection{Implementation Details of OpenWebText and ImageNet}\label{ap:impl}
The
network is conditioned on $t$ and on the step size $r-t$. On OpenWebText, $\tau_t$ and
$\tau_r$ are the larger and the smaller of two logit-normal draws
$\sigma(\mathcal{N}(P_{\mathrm{mean}},P^2_{\mathrm{std}}))$ with
$(P_{\mathrm{mean}},P_{\mathrm{std}})=(0,1.5)$ and $(-1.0,1.5)$; on ImageNet, $\tau_t$ is the
arccos mask ratio and $\tau_r$ is a logit-normal draw with $(-3.0,2.0)$ clipped to
$\tau_r\le\tau_t$, which places more mass on the large steps of the $1$--$4$ step regime. A
fraction $1-\rho=0.25$ of the pairs is drawn with $r=t$, where the correction vanishes and
the objective reduces to \Eqref{eq:training-posterior}; this is what keeps the boundary
calibrated.
On ImageNet the sampling
hyperparameters of every method, ours and the baselines, are selected by a sweep and
we report the best configuration of each method; on OpenWebText the baseline numbers are
taken from \citet{Yoo2025ReDiRD}.

\subsection{Detailed values for Section \ref{sec:exper}}
Detailed values for simulations, OpenWebText, and ImageNet can be found in Tables \ref{tab:sim-full}, \ref{tab:openwebtext}, and \ref{tab:main_results}, respectively.

\begin{table}[h]
\centering
\caption{Total variation Potts model simulations. Mean over three seeds. These are
the values plotted in Figure~\ref{fig:simulations}.}
\label{tab:sim-full}
\small
\setlength{\tabcolsep}{4pt}
\begin{tabular}{lccccccccc}
\toprule
$K$ & $2$ & $3$ & $4$ & $6$ & $8$ & $12$ & $16$ & $24$ & $32$\\
\midrule
Baseline & $0.6020$ & $0.3707$ & $0.2456$ & $0.1351$ & $0.0907$ & $0.0508$ & $0.0356$ & $0.0225$ & $0.0136$\\
Ours & $\mathbf{0.5381}$ & $\mathbf{0.3170}$ & $\mathbf{0.1934}$ & $\mathbf{0.0957}$ & $\mathbf{0.0517}$ & $\mathbf{0.0181}$ & $\mathbf{0.0116}$ & $\mathbf{0.0080}$ & $\mathbf{0.0053}$\\
\bottomrule
\end{tabular}
\end{table}

\begin{table}[h]
\centering
\caption{OpenWebText results for Figure~\ref{fig:openweb}: generative perplexity measured with LLaMA 3.1-8B and entropy, as a function of the number of sampling steps $K$. Baseline numbers are taken from \citet{Yoo2025ReDiRD}. Best perplexity per column in bold.}
\label{tab:openwebtext}
\small
\setlength{\tabcolsep}{4pt}
\begin{tabular}{lccccc}
\toprule
& \multicolumn{5}{c}{Sampling steps $K$}\\
\cmidrule(lr){2-6}
Model & $4$ & $8$ & $16$ & $32$ & $64$\\
\midrule
\multicolumn{6}{l}{\emph{Generative perplexity} }\\
DUO+DCD & $97.22$ & $72.18$ & $54.82$ & $46.05$ & $42.38$\\
ReDi$^1$ & $79.35$ & $48.53$ & $37.13$ & $31.21$ & $29.12$\\
ReDi$^2$ & $69.01$ & $45.11$ & $36.42$ & $31.56$ & $29.85$\\
ReDi$^3$ & $53.24$ & $36.33$ & $30.34$ & $27.75$ & $26.78$\\
Di4C & $55.42$ & $36.52$ & $27.66$ & $23.04$ & $21.54$\\
Di4C + ReDi$^1$ & $\mathbf{47.50}$ & $30.92$ & $24.81$ & $21.88$ & $20.32$\\
Ours & $56.08$ & $\mathbf{27.59}$ & $\mathbf{22.92}$ & $\mathbf{21.35}$ & $\mathbf{16.97}$\\
\midrule
\multicolumn{6}{l}{\emph{Entropy}}\\
DUO+DCD & $4.82$ & $5.25$ & $5.35$ & $5.36$ & $5.35$\\
ReDi$^1$ & $5.43$ & $5.49$ & $5.48$ & $5.44$ & $5.42$\\
ReDi$^2$ & $5.43$ & $5.44$ & $5.44$ & $5.41$ & $5.39$\\
ReDi$^3$ & $5.33$ & $5.30$ & $5.28$ & $5.26$ & $5.25$\\
Di4C & $4.81$ & $5.02$ & $5.07$ & $5.05$ & $5.03$\\
Di4C + ReDi$^1$ & $4.96$ & $5.07$ & $5.08$ & $5.09$ & $5.07$\\
Ours & $4.96$ & $5.06$ & $5.09$ & $5.18$ & $5.02$\\
\bottomrule
\end{tabular}
\end{table}

\begin{table}[h]
\centering
\caption{
ImageNet $256\times256$ with four network evaluations. Every method is reported at the best sampling configuration found by a
sweep. Best value per column in bold.
}
\label{tab:main_results}

\begin{tabular}{lcc}
\toprule
Model & FID $\downarrow$ & IS $\uparrow$ \\
\midrule

MaskGIT \citep{chang2022maskgit}
    & 10.90 & 184 \\
\cmidrule(lr){1-3}
SDTT \citep{deschenaux2024auto}
    & 8.97 & 205 \\
Di4C \citep{hayakawa2025distillation}
    & \textbf{6.20} & 216 \\
ReDi$^1$ \citep{Yoo2025ReDiRD}
    & 7.58 & 228 \\
ReDi$^2$ \citep{Yoo2025ReDiRD}
    & 7.86 & 240 \\
Ours
    & 7.81 & \textbf{250} \\

\bottomrule
\end{tabular}
\end{table}

\section{Unconditional Text Generation Samples ($K = 64$)}
\label{app:samples_k64}

Below we provide unconditional $1{,}024$-token text samples generated by our OpenWebText
model using the exact-marginalization sampler of
\Eqref{eq:duo-U}--\Eqref{eq:sampler-step} at $K=64$ sampling steps and temperature $T=1.0$.
Across the evaluation batch, this configuration achieves the generative perplexity and entropy
reported for ``Ours'' at $K=64$ in Table~\ref{tab:openwebtext} (LLaMA-3.1-8B generative
perplexity $16.97$, entropy $5.02$).

\begin{tcolorbox}[breakable,colback=gray!5!white,colframe=gray!75!black,title=\textbf{Sample 1: Energy Policy \& Regulatory Hearing ($K=64$, $T=1.0$)}]
\small\ttfamily
<|endoftext|> statement that he was "deeply disappointed" that the company's actions were "not in line with our public policy" during the regulatory process.\\[4pt]
"Everyone have a right to have concerns about energy security, and we do not believe that actions like this are in the vital national interest," Price said in the statement.\\[4pt]
Pamper E. Price said that the firm paid roughly \$14.1 million to lobbying groups in 2013, but a lawsuit filed by the Institute for Energy Policy Studies, the Washington-based Ceres' think group, that it won't be any more attention from the government.\\[4pt]
The company is expected to go before the House Committee on Commerce and Transportation before joining the House of Representatives this fall and hold a hearing. The White House said in a statement on Friday that it would continue to get the company's representatives to investigate its activities.\\[4pt]
"We are very disappointed this firm played an important role in efforts to implement critical energy security in our region and look forward to continuing to improve the decision being made to the American people of our region," the company said in the statement.\\[4pt]
The firms ties to politics have raised questions about the effectiveness of the Obama administration's political attacks on the industry, including the closure of a variety of plants in the United States and the hiring of about 1,200 employees from 35 individuals and nonprofit groups, including fossil fuel firms and environmental groups. The industry is heavily backed by special interests.\\[4pt]
One August, the firm's actions were part of the efforts to block the construction of the Dakota Access pipeline through Minnesota, the project being planned up in North Dakota carrying coal and crude oil.\\[4pt]
The Trump administration said the actions were related a March 2015 rule, which temporarily halted efforts to save large sections of the proposed pipeline. Protesters repeatedly blocked a portion of the proposed pipeline.\\[4pt]
“We look forward to hearing before the House today whether a few of these actions are in line with our national policy,” Price said. “There is no evidence, in fact, that these actions involving Goldstone Perkins are politically motivated and are not the policies we should be guided by.”<|endoftext|>
\end{tcolorbox}

\begin{tcolorbox}[breakable,colback=gray!5!white,colframe=gray!75!black,title=\textbf{Sample 2: Public Health Study \& International News ($K=64$, $T=1.0$)}]
\small\ttfamily
<|endoftext|> by reducing their daily intake of nutrients, such as fruits, vegetables, and nuts, could contribute to a large decrease in the number of displaced in China.\\[4pt]
The report, on the issue of food insecurity in China during the growing season, was authored by researchers from the Federal Institute of Public Affairs and the Department of Public Health of the University of Shanghai. The team, led by professors Zhou, U.S. Department of Agriculture, and Liu, also focused on the issue of the National Food Standards, which are mostly designed to combat the problem of hunger and food insecurity in China.\\[4pt]
A Professor of Public Health at the University, Wei Mao, who not involved as a senior author of the report, said that the administration decided not to enforce the National Standards "due to the misinformation or distortion of information that is currently available. Our results that excessively high-fat foods such as fruits, vegetables, can lead to high levels of blood sugar. These foods have increase the risk of heart disease and increase the risk of diabetes."\\[4pt]
The consumption of sugary drinks contributes to a high risk of food insecurity. Tens of thousand of the people currently without food in China is associated with the risk of diabetes mellitus which has nearly doubled in the past 25 years. And according to a UN monitor, the majority of the citizens of the world, Southeast Asia and East Asia, live on poor quality food conditions.\\[4pt]
Earlier this month, Shanghai Municipal Commission for Food Safety and Integrity, maker of the Standards, said that the government was carrying out an effort to improve the use of fresh vegetables in Beijing during the growing season. It's critical to understand different varieties of vegetables offer, such as fresh varieties, to encourage people to eat more vegetables, according to the report.\\[4pt]
The authors also explained that the global problem of food insecurity is that the government's largely neglected for decades, and which has hampered the efforts to combat the situation in China. As a result, they said, the need to reduce over-consumption and use of fresh food is a top priority in China.<|endoftext|>\\[4pt]
By James Mayr\\
London - World News staff\\[4pt]
British military and intelligence agencies on Friday called on the international community to prosecute the extremist organisations and anti-government forces to defeat in the name of peace and stability.\\[4pt]
The calls were made by at least two thirds of the leading members of British military, for the restoration of peace talks between Turkey and the rebels in northern Syria, and for an end to the war in Syria, which has left 250,000 people fleeing the country from abroad.<|endoftext|>
\end{tcolorbox}

\begin{tcolorbox}[breakable,colback=gray!5!white,colframe=gray!75!black,title=\textbf{Sample 3: International Diplomacy \& Economic Aid ($K=64$, $T=1.0$)}]
\small\ttfamily
<|endoftext|>United States has spoken out against what critics say is a poverty-ridden, expansionary and precarious economic situation in Egypt. Deputy Foreign Minister Rafa Mansour-Hahallah told reporters on Saturday the United States was “aware of a serious crisis with the country's dire situation.” “The U.S. is actively in the process to overcome this humanitarian crisis,” she said, after releasing a report saying that the Egyptian economy is in a “grave state of stagnation in the country.”\\[4pt]
The report said that the focus was on strengthening the security of the people and distributing programs to boost growth to end financial hardships for those living below the poverty level. She added that the situation of country is in a difficult position that any growth in activity could jeopardizing its security. Total oil production was cut by 5.2 million people in 2009 to just 3.1 million, according to the report.\\[4pt]
To mitigate the effects of the cuts, the federal subsidy is to freeze until the end of March. The proposed cuts, which will be needed to remove 3.5 million from the budget, where the central government has reserves, are only temporary. However, according to the report, the important thing is to improve the quality of life in the United States and “to restore job security and life changing services through the government's ambitious infrastructure plan,” administration officials said on Saturday.\\[4pt]
The plan is focused on creating jobs by 2011 and is meant to provide relief to the tens of millions of women, children and people living past the retirement age by bringing in full-time jobs. The State Department spokesman, Jay Carney, said that the federal government was focusing mainly to poverty-era programs and measures to help the affected people, most of whom are still facing job insecurity. “The government working hard on words, speaking by themselves.” Carney said, referring to comments published in The Washington Post. The U.S. government is determined to play a leading role in addressing this humanitarian situation, the official said. In addition to the call on the central government to take no further restrictions on oil and gas production, the United States however made it clear that the government would not place further restrictions. “The United States continues engaging with the people, and encouraging businesses to do as much as they can help restore the security of the people, as soon as now.” the United States Department said in a statement.<|endoftext|>
\end{tcolorbox}

\begin{tcolorbox}[breakable,colback=gray!5!white,colframe=gray!75!black,title=\textbf{Sample 4: Internet Openness, Privacy Law \& Climate Study ($K=64$, $T=1.0$)}]
\small\ttfamily
<|endoftext|> on June 7, 2011, in the city of Seattle, the US and Russia agreed to collaborate on three projects proposed by Russia, with the goal of protecting personal information and then they stole this information from the United States for the past 15 years. The most pressing thing they have to do is put pressure on the US government to pay for these projects because their practices have changed, and the peasants affected by this pipeline project are now before the eyes of the world," he said<|endoftext|>\\[6pt]
\textbf{Turkey}\\[4pt]
In January 2015, a citizen of Iran is six months pregnant in Turkey allegedly find themselves attempting to collect personal information of the Turkish government during a meeting of parliament of Turkey March 7, 2015. The Iranian was attempting to collect information about patient behavior, health and safety, and national security, breaking Turkish privacy laws and attempting to co-operate with Turkish authorities. No criminal charge was committed. The fine imposed on the person was determined by international law.\\[6pt]
\textbf{Germany}\\[4pt]
The United States, the United Kingdom, and the Netherlands support a variety of scientific, technological and social programs which benefit the people and environment of the world, and serve to be the engine of the global economy. This support is supported by the National Institutes of Health Sciences, the National Institutes of Health, the National Institute of Science and Technology, the Medical School of the University of Maryland, the School of Medicine and Public Health, the Massachusetts Institute of Technology, Energy and Environment, and Harvard University's School of Science, Space and Public Policy, and the Center for Environmental Science and Data Analysis at the University of Maryland.\\[6pt]
\textbf{United States}\\[4pt]
The United Nations Security Council reached an agreement on the openness of the Internet on July 11, 2011. One of the goals was to protecting national security and the openness of the Internet. On a global level, the Internet is probably one of the largest ecosystems in the world today. For example, the size of the web today is about 2.5 billion, the nature of the content has slightly through a variety of events, including melting Arctic sea ice. But over the course of the year, the surface extent of Arctic sea ice, increased steadily worldwide. A new study, shows glaciers now for about 55 percent of the world's surface ice, and about one-to-third of it is sea ice. (Changes to address climate change adaptation were made by three international experts, from the Institute for Media, Policy and Media Research, and from the World Economic Forum-WHO.)\\[4pt]
In June 2011, under Barack Obama, the United States and Russia reached a series of agreement which is designed to promote greater online openness on the Internet. The elements of this agreement: an international commitment to Internet openness, every online form of communication, full Web-wide Internet access, formal reviews, and certifying Internet providers for computers, mobile phones, and other forms of information.<|endoftext|>
\end{tcolorbox}

\begin{tcolorbox}[breakable,colback=gray!5!white,colframe=gray!75!black,title=\textbf{Sample 5: Sports Journalism: NHL \& Preseason Coverage ($K=64$, $T=1.0$)}]
\small\ttfamily
<|endoftext|>Yes.\\[4pt]
Muthyomun is an important member of the Chicago side which went through the very well and ended up in a third position in the W-L playoffs.\\[4pt]
The club had a favorable relationship with the Philadelphia Eagles and took over under the direction of Muthyomun in January. At the same time, the club now looking to reach at least 10th in W-L playoffs.\\[4pt]
Chicago head coach Antoine Bouchard is expected to visit the Chicago Blackhawks and make his preseason debut Wednesday, March 14 at the Chicago Blackhawks. In the past few months, he has been able to play as a key part of the lineup for the Chicago Blackhawks, according to the English-language website.\\[4pt]
Antoine Brouwer is a key part of the lineup for the Blackhawks and has already signed with the club most recently from the Anaheim Ducks. Anaheim had a great team last season to make the Stanley Cup Playoffs, and he is expected to be a part of the lineup for the first time since his injury in the Eastern Conference Playoffs.\\[4pt]
The Chicago Blackhawks had a great 2016 season and have a real chance at making another playoff run in the Eastern Conference. The club finished 23-16 in the regular season and was able to earn a third place spot in the WHLs first bid for the Stanley Cup by making a powerful shot that passed of his hand and his right foot bounced to the back of the net. Following hot start to the season he then scored 2 beautiful third and 2 goals in the third period against the Dallas Stars (4-2-1) and he was able to capitalize on it.\\[4pt]
He is considered one of the best players in the league and he has developed a good amount of confidence in his game. His right hand gives him good opportunities to make plays and boost his acceleration, for him to be able to hold on to his feet. While playing with his left foot down, he will have to make sure to find some movement at the end of the season with the Chicago Blackhawks.\\[4pt]
Brouwer returned to the Blackhawks after playing well in the American Hockey League. He was named the American Hockey League Most Outstanding Player of the Year and played a key part in the Chicago Blackhawks' victory over the Los Angeles Kings in the playoffs.<|endoftext|>
\end{tcolorbox}

\begin{tcolorbox}[breakable,colback=gray!5!white,colframe=gray!75!black,title=\textbf{Sample 6: US Politics \& Election Coverage ($K=64$, $T=1.0$)}]
\small\ttfamily
<|endoftext|> the most dangerous people in the federal government in 2016.\\[4pt]
The FBI confirmed they only interviewed Flynn on January 20.\\[4pt]
Washington (CNN) Matt Flynn reportedly not been interviewed by the FBI after the call was held, Director Comey reportedly chose not to answer all questions about it.\\[4pt]
Senate Democratic leaders said the action was the effective result of the official action and viewed the actions of President Trump as "a coup."\\[4pt]
"In order to keep Russia out of our country, we must continue to rely on the resolve of all members of Congress and support the American people," said Baucus, D-Mont., chairman of the council of The Church of Jesus Christ of Latter-day Saints.\\[4pt]
Vice President Joe Biden, at the center of hacking, said in a State of the Union address that he didn't immediately follow the department's emergency proclamation but it was an "appropriate decision."\\[4pt]
The accusations made their way through White House channels Saturday, said Frank Reich, a senior fellow at the liberal Center for American Progress and a member of the White House under President Donald Trump.\\[4pt]
"We have repeatedly repeated in the last days about the Trump campaign's contradictory statements about the use of chemical weapons against civilians," Reich told The New York Times. "Now, Mr. Biden's e-mails are a very damaging line of attack for the incoming administration."\\[4pt]
Senate Majority Whip Max Baucus, D-Mont., declined to comment about the allegations. On Saturday, he said, "We just have to see the results of the investigation conducted by prosecutors in our community."\\[4pt]
Former Attorney General Scott Baquet, who served under Robert M. Mueller as attorney general of the White House, has condemned the accusations.\\[4pt]
"It's difficult to say whether there's nothing else, but they certainly are there, and it's highly subjectively concluded that there's no merit in this case," he said.<|endoftext|>
\end{tcolorbox}

\subsection{CFG Ablation on ImageNet}\label{ap:imagenet-cfg}

\paragraph{Ablation Studies}
Sweeping classifier-free guidance (CFG) weight
$w\in\{1.5,2.0,3.0,4.0,5.0\}$ at $4$ and $8$ sampling steps (Figure~\ref{fig:cfg-ablation}), we
find that precision increases and recall decreases monotonically with $w$ at both step budgets,
that the best $4$-step FID ($7.81$) is attained at $w=3.0$, and that the optimal guidance weight
shifts down to $w=1.5$ (FID $6.90$) once step budget increases to $8$, where iterative
refinement already improves fidelity so a lower guidance weight suffices to preserve diversity.
Figure~\ref{fig:cfg-grid} shows the corresponding $4$-step samples across $w$, with the random
seed held fixed per class.

\begin{figure}[t]
    \centering
    \includegraphics[width=0.95\linewidth]{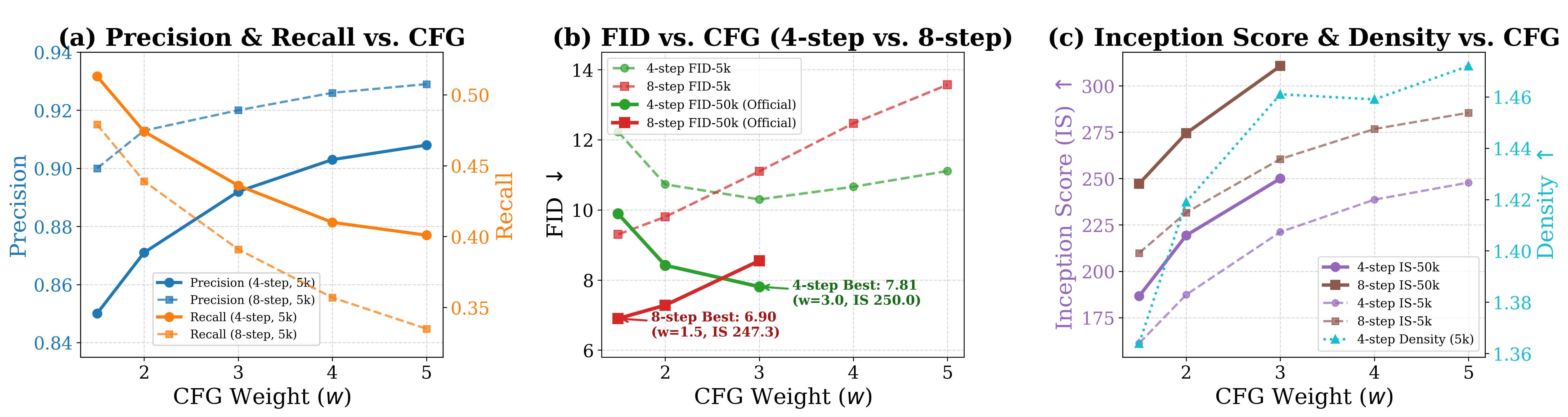}
    \caption{Ablation on the classifier-free guidance weight $w$ used to sample the training
    pairs, evaluated at $4$- and $8$-step generation on ImageNet $256\times256$ (50k samples). (a) Precision and recall vs.\ $w$.
    (b) FID vs.\ $w$, with the best configuration at each step budget annotated.
    (c) Inception score and density vs.\ $w$.}
    \label{fig:cfg-ablation}
\end{figure}

\begin{figure}[h!]
    \centering
    \includegraphics[width=0.85\linewidth]{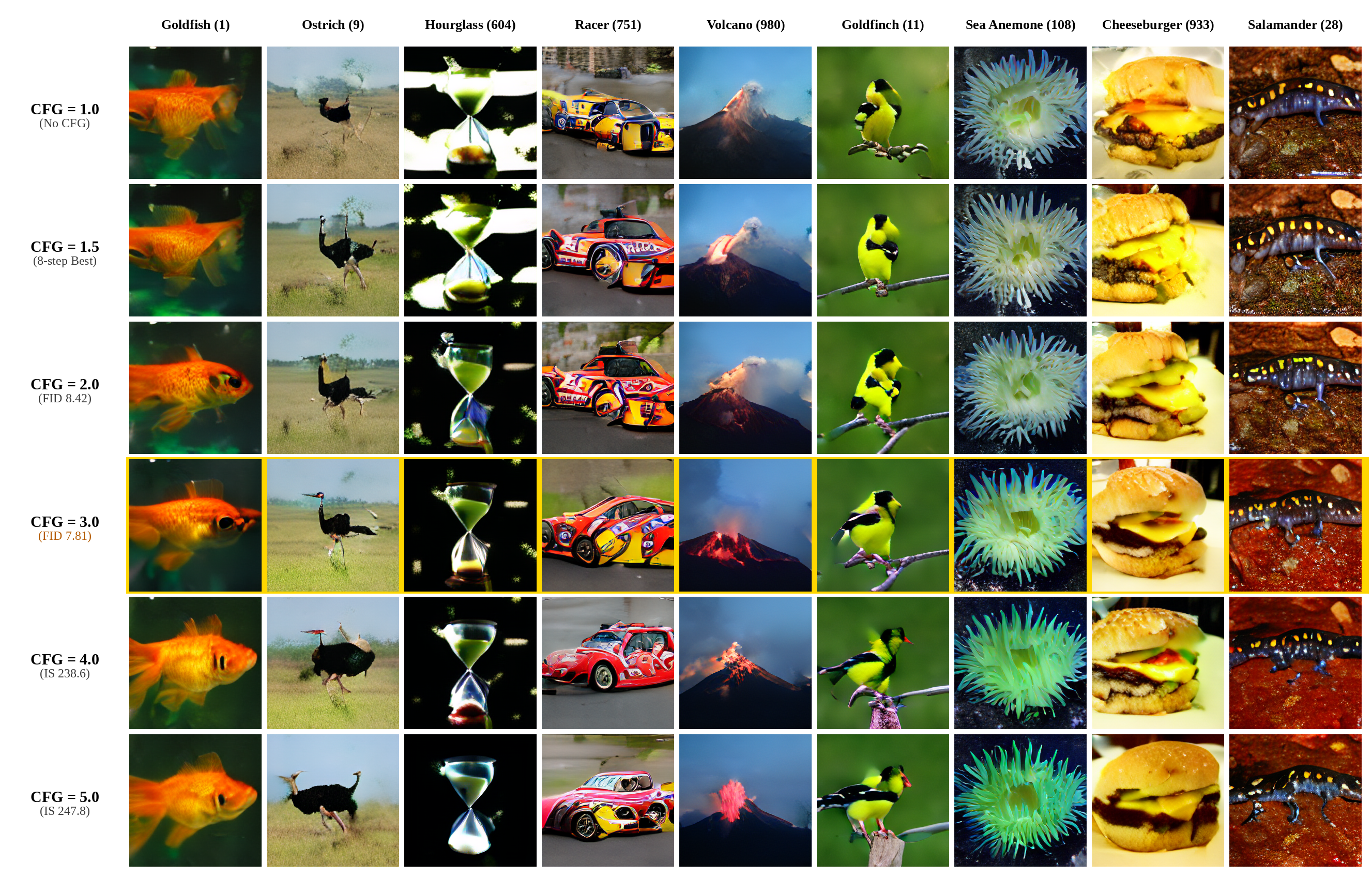}
    \caption{Qualitative effect of the CFG weight $w$ used to sample the training pairs, at
    $4$-step generation on ImageNet $256\times256$, for nine classes: goldfish, ostrich, hourglass, racer, volcano, goldfinch, sea anemone, cheeseburger and spotted salamander. The random seed is held
    fixed per class across $w\in\{1.0,1.5,2.0,3.0,4.0,5.0\}$, so only the effect of $w$
    varies; the optimal $4$-step setting ($w=3.0$, FID $7.81$) is highlighted.}
    \label{fig:cfg-grid}
\end{figure}

\subsection{Additional Qualitative Samples on ImageNet}\label{ap:imagenet-qual}

Figure~\ref{fig:imagenet-qual-appx} demonstrates the $4$-step qualitative comparison to four ImageNet classes (goldfinch, sea anemone, cheeseburger, spotted salamander)
following \citet{Yoo2025ReDiRD}.

\begin{figure}[h!]
    \centering
    \includegraphics[width=\linewidth]{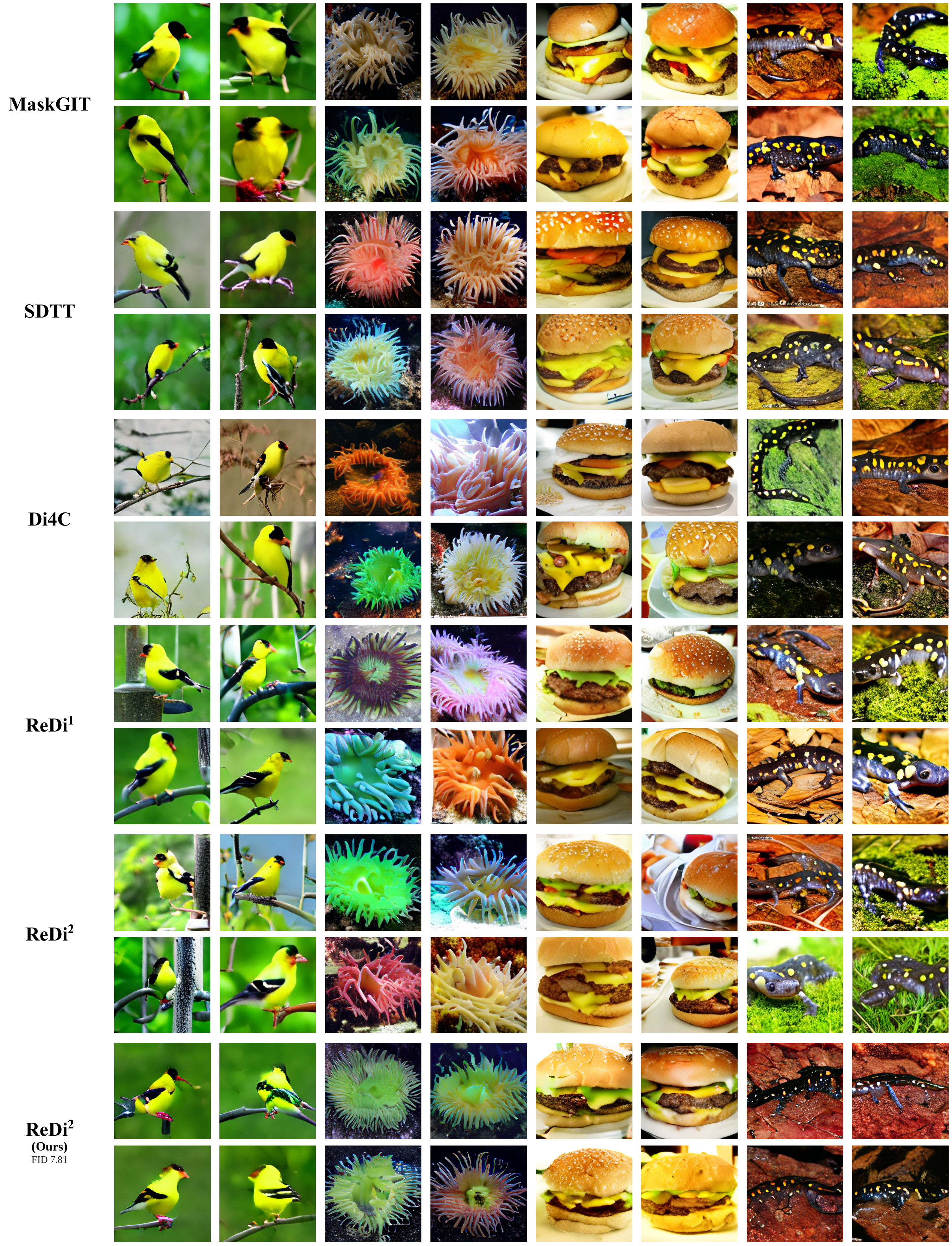}
    \caption{$4$-step generation results on ImageNet (classes goldfinch, sea
    anemone, cheeseburger, spotted salamander), two samples per class and model. Our method
    (bottom row, FID $7.81$) is appended below the baselines.}
    \label{fig:imagenet-qual-appx}
\end{figure}

\end{document}